\documentclass[12pt]{article}

\usepackage{a4wide}
\usepackage{titlesec}
\usepackage[center, font=small]{caption}
\usepackage{amsmath}
\usepackage{amssymb}
\usepackage{authblk}
\usepackage{setspace}
\usepackage{floatrow}
\usepackage{placeins}
\usepackage{listings}
\usepackage{subcaption}
\usepackage[hyphens]{url}
\usepackage{natbib}
\usepackage{rotating}
\usepackage{threeparttable}
\usepackage{array}
\usepackage{multirow}
\usepackage{appendix}
\usepackage{longtable}
\usepackage{todonotes}
\usepackage[multiple]{footmisc}
\usepackage[utf8]{inputenc}
\usepackage[T1]{fontenc}
\usepackage{xcolor}

\definecolor{perplexityteal}{HTML}{20808D}
\usepackage[colorlinks=true,
            linkcolor=perplexityteal,
            citecolor=perplexityteal,
            urlcolor=perplexityteal]{hyperref}
\usepackage{booktabs}
\usepackage{nicefrac}
\usepackage{graphicx}
\usepackage{newtxtext}
\usepackage{amsthm}
\newtheorem{prop}{Proposition}
\newtheorem{lem}{Lemma}
\newtheorem{cor}{Corollary}
\newcounter{assumption}
\newcommand{\asmplabel}[1]{\refstepcounter{assumption}\label{#1}}
\newcommand{\Conversational}{\text{Conversational}}
\newcommand{\Agent}{\text{Agent}}

\newcounter{en}
\title{\normalfont How AI Agents Reshape Knowledge Work:
\\Autonomy, Efficiency, and Scope\thanks{Correspondence to Jeremy Yang (\href{mailto:jeryang@hbs.edu}{\texttt{jeryang@hbs.edu}}) and Jerry Ma (\href{mailto:jerry@perplexity.ai}{\texttt{jerry@perplexity.ai}}).}
}

\makeatletter
\def\@maketitle{%
  \newpage
  \begin{center}%
    {\LARGE \@title \par}%
    \vskip 1.5em%
    {
      Jeremy Yang\textsuperscript{1} \
      Kate Zyskowski\textsuperscript{2} \
      Noah Yonack\textsuperscript{2} \
      Jerry Ma\textsuperscript{2}\\[1em]
      \textsuperscript{1}Harvard University \quad \\
      \vskip 0.5em
      \textsuperscript{2}Perplexity \\
    }%
    \vskip 1.5em
    June 8, 2026
  \end{center}%
  \par
  \vskip 1.5em}
\makeatother

\begin{document}
\maketitle

\begin{abstract}
Frontier AI systems are bridging the gap between intelligence and utility by shifting from conversational assistants to autonomous agents that execute tasks end to end. Using data from Perplexity's Search and Computer products, we study this transition by examining how AI agents accelerate and reshape knowledge work. We adopt an individual-level task-based framework where agents have a higher fixed delegation cost but a lower marginal execution cost per step. This framework predicts that agent access expands the affordable task frontier toward weakly higher-value tasks and weakly increases realized value; when the pre-agent budget binds, surplus and the value-to-cost ratio also weakly increase. Turning to the data, we document three key empirical findings. First, using matched session pairs with near-identical initial queries as natural experiments for the same underlying task attempted with both products, Computer performs 26~minutes of autonomous work per user session, versus 33~seconds for Search. Computer automates task decomposition and execution that Search users might otherwise manually orchestrate and implement. As a result, Computer shifts the follow-up query distribution toward higher-order work such as verification and extension. Autonomy also increases execution quality, with per-query medium-to-high dissatisfaction rates 55\% lower on Computer than on Search. Second, due to its autonomy advantage, Computer reduces completion time from 269 to 36~minutes on matched tasks, lowering estimated time and cost by 87\% and 94\%, respectively, compared to humans equipped with Search alone. Third, Computer changes the scope of work that users attempt: Computer queries more often cross occupational boundaries, require higher-order cognition, draw on broader expertise, take the form of composite tasks that bundle multiple subtasks into a single query, and unlock work activities that are essentially absent from Search usage among the same users. Together, the evidence indicates that AI agents accelerate workflows, enhance output quality, reduce costs, and expand the breadth and depth of automated work.

\end{abstract}

\clearpage
\section{Introduction}

A central question in the economics of AI is how it reshapes knowledge work \citep{agrawal2026economics}. AI and user behavior are co-evolving, producing a shifting landscape of what AI can do, how AI is used, and what its downstream economic impact is. As capabilities advance, AI products are closing the gap between intelligence and utility, changing how they are integrated into real-world workflows and creating new sources of value and new structures of work. 

Over the last few years, frontier products have progressed from conversational assistants to copilots to agents. Conversational assistants (e.g., chatbots) primarily support isolated information exchange with limited context or ability to act. Copilots embed these capabilities into existing tools and workflows, co-working with users to complete tasks within those tools' interfaces. Agents go further: they connect across a wider range of tools in the backend and return completed artifacts with little human involvement. The shift is from AI as a conversational assistant to AI as an end-to-end work execution engine, characterized by greater autonomy and deeper integration into the user's entire digital environment.

We use data from Perplexity to study the implications of this transition by comparing how knowledge work is completed with conversational assistants versus agents. As background, Figure~\ref{fig:product_progression} situates Perplexity's product portfolio within a two-dimensional space of autonomy and context. Autonomy captures the extent to which a system can plan and execute actions on behalf of the user with minimal human intervention. Context integration captures the extent to which the system can read from and write to the user's environment, including external tools and connected services. We use three Perplexity products to illustrate the broader landscape:
\begin{itemize}
\item \textbf{Perplexity Search} represents the baseline. Released in 2022, Perplexity Search introduced the \emph{answer engine} product category: it allows a user to ask a question and receive a cited, synthesized answer from a knowledge base comprising billions of documents.

\item \textbf{Comet Assistant} represents an advance in both autonomy and context. In 2025, Perplexity released the Comet web browser. Its flagship feature, Comet Assistant, is an agent that helps users access knowledge and perform work within their browser. Comet Assistant makes human-AI integration more continuous by moving interactions into the application layer, where much knowledge work already occurs, allowing AI to co-work with the user by reasoning over and acting upon open-world web environments.
\item \textbf{Perplexity Computer} advances even further across both autonomy and context. Released in 2026, Computer is a general-purpose agent orchestration system that performs work across increasingly broad environments and long horizons. A Computer user specifies an outcome, and the system autonomously searches, browses, codes, creates documents, accesses external services, delegates work to subordinate agents, and persists in these efforts until the outcome is fulfilled via real-world actions or deliverables.
\end{itemize}

\begin{figure}[!htbp]
    \centering
    \includegraphics[width=0.75\textwidth]{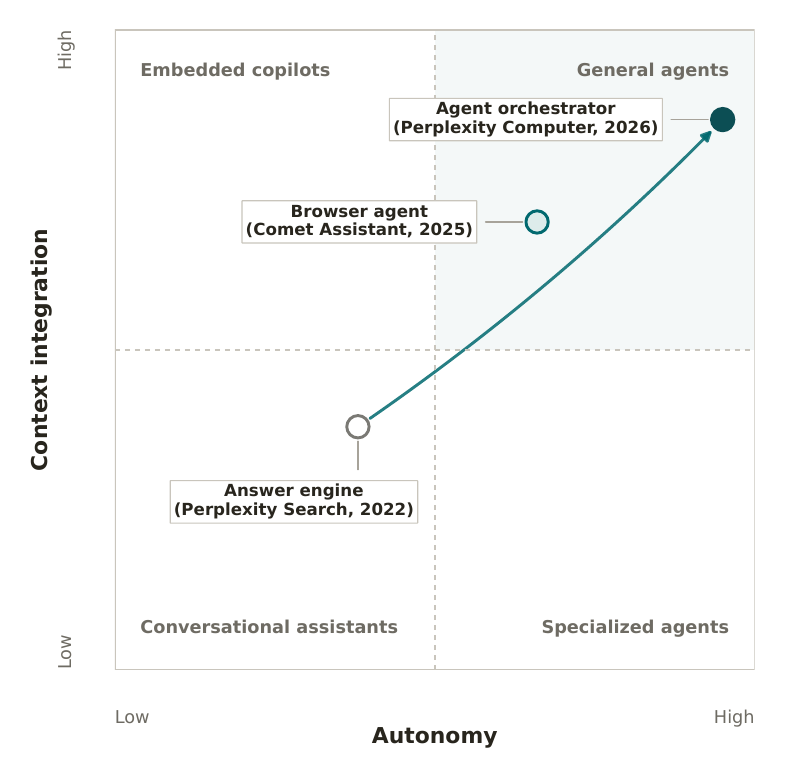}
    \caption{AI product progression by autonomy and workflow-context integration. Perplexity's Search represents the baseline for information retrieval and synthesis; Comet Assistant introduces deeper context integration and execution on top of an interactive browser interface; Computer combines long-horizon asynchronous execution with even deeper and broader context integration as an agent orchestrator.}
    \label{fig:product_progression}
\end{figure}

\clearpage
Our paper provides the first field evidence on task-level economic implications of the shift from conversational assistants to agent orchestration across a wide spectrum of knowledge work. We begin by defining a simple individual-level task-based framework to highlight the key economic forces and ground our empirical analysis. Each task is indexed by its required step count, with each step representing an atomic unit of work and longer tasks generating weakly greater value. The model centers on a shift in the cost structure: relative to conversational assistants, agents reduce marginal costs per step by replacing manual execution with autonomous implementation, but impose higher fixed costs through delegation and verification. The framework predicts that agent access expands the affordable task frontier toward weakly higher-value tasks and weakly increases total realized value; when the pre-agent budget binds, gains in total surplus and the aggregate value-to-cost ratio follow as corollaries. We then connect the framework to our empirical setting by mapping Search and Computer to the conversational assistant and agent categories respectively. We organize the empirical analysis around four themes:

\begin{enumerate}
    \item \textbf{Adoption} (Section~\ref{adoption}). Computer grew rapidly: cumulative queries reached 84$\times$ their first-week total over the 3-month study window (February~27 through May~27, 2026). A random sample of 100{,}000 classified queries further characterizes the use-case distribution: Research \& Analysis (25.8\%) and Document \& Asset Creation (18.6\%) are the largest categories, with both subject-matter domains and output formats (e.g., documents, websites, codebases, spreadsheets) spanning the breadth of knowledge work.

    \item \textbf{Autonomy} (Section~\ref{autonomy}). Because the same users interact with both products over the same period, we leverage matched sessions as natural experiments to control for user and task heterogeneity. In 10{,}000 session pairs with near-identical initial queries (cosine similarity $> 0.99$), Computer performs 26~minutes of autonomous planning and execution per session versus 33~seconds for Search, a 48$\times$ ratio in machine work. Classification of follow-up queries from 1{,}000 matched multi-turn session pairs reveals that Computer replaces manual directives with task verification and extension. Higher autonomy is achieved without sacrificing quality: on the next-turn dissatisfaction signal, Computer elicits medium-to-high dissatisfaction on 1.3\% of queries versus 2.9\% for Search, a 55\% reduction.

    \item \textbf{Efficiency} (Section~\ref{efficiency}). On the same matched sessions, a human equipped with Search alone takes an average of 269~minutes to complete a task. Replacing manual execution with automated implementation, the Computer~+~Human workflow reduces average task completion time to 36~minutes, lowering time and cost by 87\% and 94\%, respectively. A breakeven analysis shows that a Search-aided human professional would need to complete all manual steps in under 20~minutes to match the cost of Computer~+~Human. A sensitivity analysis further confirms robustness to variation in human-time estimates. These results are cross-validated through an independent LLM-driven procedure and user interviews.

    \item \textbf{Scope} (Section~\ref{scope}). Autonomous execution also expands the range of work users attempt.

    \emph{Horizontally}: Based on a sample of 8{,}000 users across 8 occupation clusters and all their queries, Computer queries venture outside users' primary occupation more often than Search queries from the same users. This pattern holds across all 8 occupation clusters, with an average gap of 9~percentage points (pp).

    \emph{Vertically}: Task difficulty also differs. A classification of 10{,}000 Computer and Search queries from a sample of 5{,}000 dual-product users suggests that:
    \begin{enumerate}
        \item Computer queries are more cognitively complex: 71\% abstract non-routine tasks versus 53\% for Search; 76\% higher-order Bloom cognition versus 55\%; Create-level work accounts for 50\% of Computer queries versus 26\% of Search.
        \item Computer queries draw upon a broader set of competencies: each Computer query requires substantive expertise in an average of 2.40 distinct O*NET Knowledge domains versus 1.74 for Search (+38\%), with Computer nearly three times as likely as Search to require three or more domains (51\% vs.\ 17\%).
        \item Computer composes more tasks into a single query: at the work-activity level, Computer queries engage an average of 2.95 O*NET Generalized Work Activities versus 2.24 for Search (+32\%) and 4.01 Intermediate Work Activities versus 2.87 (+40\%); the gap widens at finer grains, with 59\% more Detailed Work Activities (3.64 vs.\ 2.29) and 60\% more occupation-specific Task Statements (3.81 vs.\ 2.38) engaged.
        \item Computer unlocks new task possibilities for users: among Computer queries that engage at least one Task Statement, 23\% involve a Task Statement that never appears in the same users' Search query sample, and these Computer-only queries also tend to be more complex. The shares are smaller at coarser grains (5\% for Detailed Work Activities, under 1\% for Intermediate and Generalized Work Activities), indicating that Computer's distinctiveness lies in fine-grained executional work rather than coarse topical range. These shares also increase as the tolerance threshold is relaxed.

    \end{enumerate}

\end{enumerate}

Together, these findings suggest that autonomous task execution accelerates existing workflows, improves quality, reduces costs, and expands the range of work users undertake. By automating the generative components of tasks that require specialized expertise, agents make it easier for users to branch into domains outside their core competency and take on tasks that are costly to produce but relatively easy to verify. As individual workers absorb tasks that previously spanned occupational boundaries and expertise levels, the findings also suggest a reduction in coordination costs, with broader implications for occupational and organizational structure.

The paper proceeds as follows. Section~\ref{related-work} situates our contribution relative to prior work on AI's productivity impact, autonomous agent capabilities, and task recomposition. Section~\ref{framework} develops an individual-level task-based conceptual framework to derive welfare predictions and motivate the empirical analysis. Section~\ref{data} describes the samples drawn from Perplexity's Search and Computer products over the February--May 2026 study period. Sections~\ref{adoption}, \ref{autonomy}, \ref{efficiency}, and \ref{scope} present the four empirical themes in turn: adoption growth and use cases; autonomy gains on matched sessions; reductions in task time and cost; and expansion in the scope of work. Section~\ref{discussion} discusses limitations and implications. Proofs, a numerical example illustrating the propositions, and supplementary analysis and user-interview materials are collected in the Appendix.

\section{Related Work}\label{related-work}

\paragraph{Productivity impact of AI assistants.} A growing body of experimental evidence documents the productivity effects of generative AI assistants. For instance, \citet{noy2023experimental} find that ChatGPT reduces writing time by 40\% and raises output quality by 18\% in a randomized experiment with 453 professionals, with the largest gains for lower-ability workers. \citet{brynjolfsson2025generative} study 5{,}172 customer support agents and report a 15\% increase in issues resolved per hour, again with disproportionate gains for novice workers. \citet{dellacqua2023navigating} find that BCG consultants using GPT-4 improve performance by up to 34\% on tasks within the model's capability frontier, but perform worse on tasks beyond it, a ``jagged frontier'' that underscores the importance of task--tool fit. \citet{cui2024effects} run three field experiments with 4{,}867 software developers and find that GitHub Copilot increases completed tasks by 26\%, with junior developers benefiting most. A notable counterpoint comes from \citet{becker2025measuring}, whose randomized trial finds that experienced open-source developers using AI tools were 19\% slower, suggesting that productivity gains may depend on task familiarity and developer expertise. Relatedly, \citet{vendraminelli2025genai} document a ``GenAI wall effect'' in which AI assistance fails to fully close the performance gap between occupational insiders and outsiders, highlighting limits to horizontal expertise transfer through interactive tools. Moving beyond controlled experiments, \citet{tamkinmccrory2025productivity} estimate productivity gains directly from large-scale Claude usage logs, finding 80\% time savings across a broader set of tasks. These studies focus on humans working interactively with an AI assistant that augments each step. Our setting differs because Computer replaces the interactive loop with asynchronous delegation.

\paragraph{From assistance to autonomous AI agents.} The progression from interactive assistants to autonomous agents has been driven by advances in tool use and multi-step reasoning. \citet{schick2023toolformer} demonstrate that language models can learn to invoke external tools in a self-supervised manner; \citet{yao2023react} show that interleaving reasoning traces with action steps improves task completion on knowledge-intensive and decision-making tasks. \citet{kwa2025measuring} introduce the ``time horizon'' metric (the task duration at which agents achieve 50\% success rate), estimate the frontier at roughly one human-hour as of early 2025, and document a doubling time of approximately seven months; METR's updated estimates place the frontier at roughly 12 human-hours by early 2026 \citep{metr2026horizons}. Comparing how agents and humans execute the same tasks across five occupational domains, \citet{wang2025how} find that agents favor programmatic workflows over the UI-centric approaches humans use, delivering results 88\% faster and more than 90\% cheaper, though with quality that still lags human work. Beyond capability measurements, production deployments are beginning to reveal agent usage and impact in real-world settings. For instance, \citet{sarkar2026agents} analyzes nearly 120{,}000 Cursor users around the rollout of an agentic coding mode and finds that companies merge 39\% more code changes under the agent default, with experienced developers shifting effort from typing code to planning and supervising the agent. \citet{mccain2026measuring} analyze millions of human-agent interactions in Anthropic's Claude Code and find that autonomous turn durations at the 99.9th percentile nearly doubled from under 25 to over 45 minutes between October 2025 and January 2026, with experienced users granting agents full autonomy in over 40\% of sessions. \citet{demirer2026writing} combine usage data from over 100{,}000 GitHub developers across successive generations of coding tools and find that autocomplete, interactive coding agents, and autonomous coding agents raise coding activity by 40\%, 140\%, and 180\% respectively; these task-level gains, however, attenuate sharply down the production chain---to 50\% for projects and 30\% for releases---consistent with human bottlenecks limiting how much reaches shipped software. \citet{yang2025adoption} study the adoption and use of Perplexity's Comet agentic browser and find that early users concentrate on productivity- and learning-related use cases. Our study complements this work by comparing user interaction with an autonomous agent versus a conversational assistant, and by extending downstream impact analysis beyond coding to a wide range of knowledge work.

\paragraph{Occupational exposure and task recomposition.} Several studies move beyond individual- and task-level analysis to estimate, at the macro level, which occupations and tasks are most exposed to AI automation. For instance, \citet{eloundou2024gpts} find that approximately 80\% of the U.S.\ workforce could have at least 10\% of their tasks affected by LLMs, with higher-wage occupations more exposed. \citet{felten2023occupational} construct an AI occupational exposure index updated for language-model capabilities and find similar patterns of broad exposure concentrated in white-collar work. These exposure analyses assess potential displacement but do not measure how task composition actually changes once AI tools are adopted. Usage-based measurements are beginning to fill this gap: for instance, \citet{appel2026economic} analyze two million Claude conversations and API transcripts mapped to occupational tasks and find that, in consumer usage, augmentation slightly outpaces automation. \citet{massenkoff2026labor} introduce a usage-based measure of AI exposure and document that observed deployment lags far behind theoretical capability across occupations. A complementary theoretical literature emphasizes that automation can both displace existing tasks and create new ones, with the net labor-market impact depending on the balance between displacement and reinstatement forces \citep{acemoglu2019automation}. Recent work further shows why task-level exposure measures can miss important complementarities in how work is organized. For instance, \citet{gans2026ring} extend the O-ring model to production processes with complementary tasks and show that the return to automation is limited by the human bottlenecks in the process. \citet{garicano2026weak} model jobs as bundles of tasks and show that weak bundles of easily separable tasks face stronger displacement pressure, whereas strong bundles are more resilient. Our analysis provides direct evidence on task recomposition as users shift from conversational assistants to agents: usage tends to expand horizontally into other occupations and vertically into more complex work.

\section{Conceptual Framework}\label{framework}

This section develops a simple task-based framework at the individual-worker level. Its aim is to derive partial-equilibrium predictions about how autonomous execution affects task completion under minimal assumptions and to ground the empirical analysis. To keep the theory product-agnostic, we refer to two modes: a conversational mode and an agent mode. We keep the framework general and provide a concrete numerical example in Appendix~\ref{app:numerical}.

\subsection{Task primitives}

Consider a finite set of candidate task opportunities indexed by:
\[
j=1,\ldots,J.
\]
Tasks are ordered by the number of steps or subtasks required for completion:
\[
0 < s_1 \leq s_2 \leq \cdots \leq s_J,
\]
which we call the task's \emph{step count}. A step is an atomic unit of work (e.g., lookup, calculation, code execution, synthesis). We make four assumptions on the value and cost structure.

\paragraph{Assumption 1: Higher-step tasks have weakly higher value.}\asmplabel{asmp:value} Task values are weakly increasing in step count:
\[
0 < v_1 \leq v_2 \leq \cdots \leq v_J.
\]
This implies that longer tasks create weakly greater value. We adopt the convention $v_0 \equiv 0$: value is zero if no task is completed.

\paragraph{Assumption 2: Task value is realized upon full completion.}\asmplabel{asmp:completion} Task value $v_j$ is realized only upon completion of all $s_j$ steps; partially completed tasks generate no value. Equivalently, task opportunities are indivisible: a user attempts task $j$ in full or not at all.

\paragraph{Assumption 3: Agent has higher fixed per-task cost than Conversational.}\asmplabel{asmp:fixed} Each mode has a fixed per-task cost:
\[
 0 < f_{\Conversational} < f_{\Agent}.
\]
$f_{\Conversational}$ captures the cost of formulating the conversational prompt, typically a single question. $f_{\Agent}$ captures the cost of delegating a task to an agent: specifying a well-scoped objective and later reviewing the delegated output. The sign $f_{\Agent} > f_{\Conversational}$ reflects a simple observation about agent user experience: directing an autonomous system typically requires more specification and verification than issuing a simple one-shot query.

\paragraph{Assumption 4: Agent has lower marginal per-step cost than Conversational.}\asmplabel{asmp:marginal} Each step incurs a mode-specific marginal cost $m_{t} > 0$, and the agent mode is cheaper per step than the conversational mode:
\[
0 < m_{\Agent} < m_{\Conversational}.
\]
Under the conversational mode the user must plan each step, issue it, read the response, and execute; the per-step cost includes human planning, interpretation, and execution overhead. Under the agent mode the system plans and executes autonomously, leaving the user only the one-time cost of delegation and review. Equivalently, $m_{\Agent}$ is the \emph{delegated-execution} cost per step, $m_{\Conversational}$ is the \emph{human-in-the-loop} cost per step, and their gap is the autonomy premium.

\paragraph{Total cost.} Combining the primitives, the total cost to complete task $j$ under mode $t$ is
\[
C(s_j; t) = f_{t} + m_{t} s_j.
\]
For nonempty toolkit $\mathcal{T} \subseteq \{\Conversational, \Agent\}$ define the effective completion cost
\[
c_j^{\mathcal{T}} \equiv \min_{t \in \mathcal{T}} C(s_j; t).
\]

\subsection{Cost properties}

The following property of the cost function follows directly from Assumptions~\ref{asmp:fixed}--\ref{asmp:marginal}. Proofs are collected in Appendix~\ref{app:proofs}.

\begin{lem}[Agent is preferred for more complex tasks]\label{lem:sorting}
Under Assumptions~\ref{asmp:fixed}--\ref{asmp:marginal}, for any task that could be routed to either mode, the user strictly prefers the agent mode to the conversational mode whenever the step count exceeds a positive threshold,
\[
s > s^{\ast} \equiv \frac{f_{\Agent} - f_{\Conversational}}{m_{\Conversational} - m_{\Agent}} > 0,
\]
and strictly prefers the conversational mode below it.
\end{lem}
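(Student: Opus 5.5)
The plan is to compare the two total-cost functions directly by forming their difference as a function of the step count. For a task with step count $s$, define
\[
\Delta(s) \equiv C(s;\Conversational) - C(s;\Agent) = \bigl(f_{\Conversational} - f_{\Agent}\bigr) + \bigl(m_{\Conversational} - m_{\Agent}\bigr)s .
\]
The user strictly prefers the agent mode exactly when $\Delta(s) > 0$, and strictly prefers the conversational mode exactly when $\Delta(s) < 0$. This follows from the definition of the effective completion cost $c_j^{\mathcal{T}}$ as the minimum over available modes, applied with $\mathcal{T} = \{\Conversational, \Agent\}$.

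First, I would record the signs of the two coefficients. By Assumption~\ref{asmp:fixed} the intercept $f_{\Conversational} - f_{\Agent}$ is strictly negative. By Assumption~\ref{asmp:marginal} the slope $m_{\Conversational} - m_{\Agent}$ is strictly positive. Hence $\Delta$ is a strictly increasing affine function of $s$ with $\Delta(0) < 0$.

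Second, I would solve $\Delta(s) = 0$ to obtain the unique root $s^{\ast} = (f_{\Agent} - f_{\Conversational})/(m_{\Conversational} - m_{\Agent})$. This root is strictly positive because it is a ratio of two strictly positive quantities.

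Third, strict monotonicity gives the conclusion. Dividing the inequality $\Delta(s) > 0$ by the positive slope shows that it is equivalent to $s > s^{\ast}$, and likewise $\Delta(s) < 0$ is equivalent to $s < s^{\ast}$. This yields both halves of Lemma~\ref{lem:sorting}. I would also note in passing that at $s = s^{\ast}$ the two modes are tied, so the lemma's strict statements correctly exclude that point.

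There is no real obstacle here. The only step needing care is keeping the direction of the inequality when dividing, which is safe because $m_{\Conversational} - m_{\Agent} > 0$ is guaranteed by Assumption~\ref{asmp:marginal}. The argument holds for any real $s > 0$, so it applies in particular to every $s_j$.
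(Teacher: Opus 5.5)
Your proposal is correct and follows essentially the same route as the paper's proof, which compares $C(s;\Agent)$ with $C(s;\Conversational)$ and rearranges to $s > s^{\ast}$, with positivity of the denominator taken from Assumption~\ref{asmp:marginal}. The paper likewise notes that the inequality reverses for $s < s^{\ast}$ and that the two modes tie at $s = s^{\ast}$, so your affine-difference $\Delta(s)$ is just a more explicit packaging of the same argument.
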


Lemma \ref{lem:sorting} establishes that the agent's dominance is confined to tasks with enough steps to amortize its higher delegation overhead. Even though the agent mode is strictly cheaper per step, its higher fixed delegation cost dominates on short tasks. The conversational mode is therefore preferred in the low-$s$ region (e.g., quick lookups, single-turn clarifications, one-off factual questions), while the agent mode is preferred in the high-$s$ region.

\subsection{Optimal task selection}
Let $B>0$ denote the user's resource endowment. Let $a_j \in \{0,1\}$ indicate whether task $j$ is attempted. If task $j$ is attempted, the user also selects a mode from the toolkit that minimizes the cost of task $j$. By Assumption~\ref{asmp:completion}, the user then pays the total mode-dependent cost $c_j^{\mathcal{T}}$ and generates value $v_j$; otherwise nothing is paid and no value is realized. The user therefore solves a standard 0/1 knapsack problem
\[
\max_{\{a_j\}_{j=1}^{J}} \sum_{j=1}^{J} a_j v_j
\]
subject to the aggregate resource budget
\[
\sum_{j=1}^{J} a_j c_j^{\mathcal{T}}\leq B.
\]
We describe a standard dynamic programming solution in Appendix~\ref{app:dp}.

\subsection{Predictions}

Let $a^{\ast}_{\mathcal{T},j}$ denote the optimal attempt decision for task $j$ under toolkit $\mathcal{T}$, and write
\[
W^{\mathcal{T}} \equiv \sum_{j=1}^{J} a^{\ast}_{\mathcal{T},j} v_j,
\qquad
K^{\mathcal{T}} \equiv \sum_{j=1}^{J} a^{\ast}_{\mathcal{T},j} c_j^{\mathcal{T}},
\]
\[
\Pi^{\mathcal{T}} \equiv W^{\mathcal{T}} - K^{\mathcal{T}},
\qquad
A^{\mathcal{T}}\equiv\{j:a^{\ast}_{\mathcal{T},j}=1\}
\]
for the total realized value, total realized cost, total surplus (value net of cost), and selected task set in the optimum under $\mathcal{T}$. We use \emph{pre} and \emph{post} for the conversational-only and conversational-plus-agent toolkits.
For shorthand, write effective costs in both periods as
\[
c_j^{\text{pre}} \equiv C(s_j;\Conversational), \qquad
c_j^{\text{post}} \equiv \min\{C(s_j;\Conversational),C(s_j;\Agent)\}.
\]
Let the induced upper affordability endpoints be
\[
u^{\text{pre}}\equiv\max\{j:c_j^{\text{pre}}\leq B\},
\qquad
u^{\text{post}}\equiv\max\{j:c_j^{\text{post}}\leq B\}
\]
with the convention $u=0$ when no task is individually affordable.

We derive several predictions from the knapsack problem, with proofs in Appendix~\ref{app:proofs}.

\begin{prop}[Affordable value frontier expands]\label{prop:afford}
Adding agent access weakly expands the set of individually affordable tasks; therefore, the highest individually affordable value weakly rises:
\[
u^{\text{post}}\geq u^{\text{pre}}, \quad v_{u^{\text{post}}}\geq v_{u^{\text{pre}}}.
\]
This indicates that the agent mode unlocks weakly higher-value tasks that are not feasible under the conversational mode.
\end{prop}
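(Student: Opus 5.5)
The argument is a pointwise comparison of effective costs followed by monotonicity of the index sets. For every task $j$ the post-period effective cost is a minimum taken over a larger toolkit, so
\[
c_j^{\text{post}} = \min\{C(s_j;\Conversational),C(s_j;\Agent)\} \leq C(s_j;\Conversational) = c_j^{\text{pre}}.
\]
This needs none of the assumptions beyond the definition of $c_j^{\mathcal{T}}$ as a minimum over the toolkit. Lemma~\ref{lem:sorting} tells us where the inequality is strict, namely when $s_j > s^{\ast}$, but that refinement is not needed for the weak claim.

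From the pointwise inequality, the set of individually affordable tasks expands. If $c_j^{\text{pre}} \leq B$, then $c_j^{\text{post}} \leq c_j^{\text{pre}} \leq B$, so
\[
\{j : c_j^{\text{pre}} \leq B\} \subseteq \{j : c_j^{\text{post}} \leq B\}.
\]
Taking maxima over these nested finite sets gives $u^{\text{post}} \geq u^{\text{pre}}$. The empty-set convention has to be handled here. If no task is affordable pre-agent, then $u^{\text{pre}} = 0$ and the inequality holds trivially, since $u^{\text{post}} \geq 0$ by the same convention.

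The value claim then follows from Assumption~\ref{asmp:value}. The values $v_j$ are weakly increasing in the index $j$, and we use the convention $v_0 \equiv 0 < v_j$ for $j \geq 1$. Together these make $j \mapsto v_j$ weakly increasing on $\{0,1,\ldots,J\}$. Hence $u^{\text{post}} \geq u^{\text{pre}}$ implies $v_{u^{\text{post}}} \geq v_{u^{\text{pre}}}$.

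No step is a real obstacle. The only care needed is bookkeeping around the $u = 0$ convention, so that $v_0$ is defined and monotonicity extends to index $0$. A further subtlety is that the affordable set need not be an initial segment of indices, because costs increase in $s_j$ but the affordable set could in principle be irregular. The argument above deliberately uses only set inclusion and maxima, so it does not depend on this.
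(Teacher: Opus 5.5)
Your proof is correct and follows essentially the same route as the paper: the pointwise bound $c_j^{\text{post}}\leq c_j^{\text{pre}}$, the nesting of the individually affordable sets to get $u^{\text{post}}\geq u^{\text{pre}}$, and monotonicity of $v$ on $\{0,1,\ldots,J\}$ via $v_0\equiv 0$. One correction to your closing aside is that the affordable sets are in fact prefixes $\{1,\ldots,u\}$, because $m_t>0$ and $s_j$ is weakly increasing in $j$, so each $c_j^{\mathcal{T}}$ is weakly increasing in $j$; the paper uses this to conclude that every task indexed $u^{\text{pre}}+1,\ldots,u^{\text{post}}$ is newly unlocked, although, as you say, the two inequalities themselves do not need it.
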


\begin{prop}[Total value expands]\label{prop:value}
Adding agent access weakly increases total realized value: $W^{\text{post}} \geq W^{\text{pre}}$.
\end{prop}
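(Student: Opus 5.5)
The plan is a revealed-preference (feasibility) argument: I will show that the pre-agent optimum stays feasible once the agent is added, so the post-agent optimum can only do at least as well. No structure of the knapsack solution is needed beyond feasibility and optimality, so Assumptions~\ref{asmp:value} and~\ref{asmp:completion} enter only through the definition of the 0/1 problem.

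\emph{Step 1 (pointwise cost comparison).} For every task $j$, I will record
\[
c_j^{\text{post}} = \min\{C(s_j;\Conversational),C(s_j;\Agent)\} \leq C(s_j;\Conversational) = c_j^{\text{pre}}.
\]
This holds directly from the definition of the minimum over a larger toolkit, $\{\Conversational\}\subseteq\{\Conversational,\Agent\}$. Lemma~\ref{lem:sorting} is not needed here, although it describes when the inequality is strict.

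\emph{Step 2 (feasibility transfer).} Let $\{a^{\ast}_{\text{pre},j}\}$ be an optimal pre-period selection. It satisfies $\sum_j a^{\ast}_{\text{pre},j}c_j^{\text{pre}}\leq B$. Since $a^{\ast}_{\text{pre},j}\geq 0$, Step~1 gives
\[
\sum_j a^{\ast}_{\text{pre},j}c_j^{\text{post}}\leq \sum_j a^{\ast}_{\text{pre},j}c_j^{\text{pre}}\leq B.
\]
So the same attempt vector is feasible in the post problem. When the user implements it, each attempted task is simply routed to its cheapest available mode.

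\emph{Step 3 (optimality).} The post objective $\sum_j a_j v_j$ is identical to the pre objective, since values do not depend on mode. Evaluated at the pre selection, the post objective therefore equals $W^{\text{pre}}$. Because $a^{\ast}_{\text{post}}$ maximizes this objective over a feasible set that contains $a^{\ast}_{\text{pre}}$, we get $W^{\text{post}}\geq W^{\text{pre}}$.

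The main point requiring care is well-definedness rather than any hard estimate. I will note that the feasible set is finite and always nonempty, since $a=0$ is feasible because $B>0$. Hence an optimum exists in both periods. If the optimum is not unique, $W^{\mathcal{T}}$ is still the common optimal value, so the comparison is unaffected by how ties are broken.
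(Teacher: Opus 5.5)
Your proposal is correct and follows essentially the same argument as the paper: the pointwise inequality $c_j^{\text{post}}\leq c_j^{\text{pre}}$ keeps the pre-period optimal bundle feasible after agent access, so optimality of the post-period choice gives $W^{\text{post}}\geq W^{\text{pre}}$. Your remark on existence and tie-breaking (the feasible set is finite and contains $a=0$, and $W^{\mathcal{T}}$ is the common optimal value) is a harmless refinement that the paper leaves implicit.
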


\begin{cor}[Total surplus expands]\label{cor:surplus}
When the conversational-only chosen bundle exhausts the aggregate budget, adding agent access weakly increases total surplus:
\[
\Pi^{\text{post}} \geq \Pi^{\text{pre}}.
\]
\end{cor}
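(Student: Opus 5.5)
The plan is to combine the budget-exhaustion hypothesis with Proposition~\ref{prop:value}, using only the feasibility of the post-agent optimum. No knapsack structure beyond feasibility should be needed. The statement is essentially a two-line chain of inequalities once the cost terms are handled correctly.

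\emph{Step 1 (pre-agent cost).} I would first rewrite the pre-agent surplus using the hypothesis. ``The conversational-only chosen bundle exhausts the aggregate budget'' means $K^{\text{pre}} = \sum_j a^{\ast}_{\text{pre},j} c_j^{\text{pre}} = B$. Hence
\[
\Pi^{\text{pre}} = W^{\text{pre}} - B .
\]

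\emph{Step 2 (post-agent cost).} I would bound the post-agent cost from above by feasibility alone. The post-agent optimum $a^{\ast}_{\text{post}}$ satisfies the budget constraint $\sum_j a^{\ast}_{\text{post},j} c_j^{\text{post}} \le B$, so $K^{\text{post}} \le B$. Therefore
\[
\Pi^{\text{post}} = W^{\text{post}} - K^{\text{post}} \ge W^{\text{post}} - B .
\]

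\emph{Step 3 (value comparison).} I would invoke Proposition~\ref{prop:value}, which gives $W^{\text{post}} \ge W^{\text{pre}}$. Chaining the inequalities yields
\[
\Pi^{\text{post}} \ge W^{\text{post}} - B \ge W^{\text{pre}} - B = \Pi^{\text{pre}} .
\]

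The only point needing care, and the closest thing to an obstacle, is that the knapsack optimum may not be unique. The quantities $W$, $K$ and $\Pi$ are defined relative to a particular chosen optimum $a^{\ast}$, so I would check that each step survives any choice of optimizer. Step 2 uses only feasibility, so it holds for whichever post-agent bundle is selected. Step 3 uses only optimal values, which are unique, and Proposition~\ref{prop:value} is stated in those terms. The hypothesis in Step 1 is tied to the specific chosen pre-agent bundle, which is exactly how the corollary phrases it.

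I would close with a remark on why the hypothesis is needed. Without budget exhaustion, the pre-agent bundle could leave slack, so that $K^{\text{pre}} < B$. Meanwhile the post-agent bundle could buy slightly more value at much higher cost, and then surplus could fall. This is why the result is stated as a corollary conditional on the budget binding, rather than unconditionally.
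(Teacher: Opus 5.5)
Your proof is correct and is essentially the paper's proof: both use $K^{\text{pre}} = B$ from the hypothesis, $K^{\text{post}} \le B$ from feasibility of the post-agent bundle, and $W^{\text{post}} \ge W^{\text{pre}}$ from Proposition~\ref{prop:value}. The only difference is cosmetic: you chain the inequalities through $B$, whereas the paper writes $\Pi^{\text{post}}-\Pi^{\text{pre}} = (W^{\text{post}}-W^{\text{pre}})-(K^{\text{post}}-K^{\text{pre}}) \ge 0$.
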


\begin{cor}[Value-to-cost ratio expands]\label{cor:ratio}
When the conversational-only chosen bundle exhausts the aggregate budget, adding agent access weakly increases the aggregate value-to-cost ratio:
\[
\frac{W^{\text{post}}}{K^{\text{post}}} \;\geq\; \frac{W^{\text{pre}}}{K^{\text{pre}}}.
\]
\end{cor}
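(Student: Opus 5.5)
The plan is a two-line comparison of numerators and denominators, using Proposition~\ref{prop:value} for the numerators and the budget constraint for the denominators. The hypothesis that the conversational-only bundle exhausts the budget means $K^{\text{pre}} = B$. The post-period optimum is feasible for the post-period knapsack, so $K^{\text{post}} = \sum_j a^{\ast}_{\text{post},j} c_j^{\text{post}} \leq B$. Combining the two gives the denominator comparison
\[
0 < K^{\text{post}} \leq B = K^{\text{pre}}.
\]
For the numerators, Proposition~\ref{prop:value} gives $W^{\text{post}} \geq W^{\text{pre}}$.

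Before dividing, I will check that both ratios are well defined and nonnegative. Since $B>0$ and $K^{\text{pre}}=B$, the pre-period selected set $A^{\text{pre}}$ is nonempty. Then Assumption~\ref{asmp:value} ($v_j>0$) gives $W^{\text{pre}}>0$, and hence $W^{\text{post}}\geq W^{\text{pre}}>0$. Consequently $A^{\text{post}}$ is nonempty. Every effective cost is strictly positive, because $f_t>0$, $m_t>0$ and $s_j>0$, so $K^{\text{post}}>0$.

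The final step is the chain
\[
\frac{W^{\text{post}}}{K^{\text{post}}} \;\geq\; \frac{W^{\text{pre}}}{K^{\text{post}}} \;\geq\; \frac{W^{\text{pre}}}{K^{\text{pre}}}.
\]
The first inequality raises the numerator with a positive denominator. The second shrinks the positive denominator from $K^{\text{pre}}$ to $K^{\text{post}}$ with a positive numerator.

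There is no real obstacle. The only care needed is the degenerate case, namely ensuring $K^{\text{post}}\neq 0$, which the budget-exhaustion hypothesis handles as above. It is also worth noting that the argument does not depend on which optimal bundle is selected when there are ties. This is because it uses only $W^{\text{post}} \geq W^{\text{pre}}$, which holds for any optimum, and feasibility of the post optimum, which holds for every selection.
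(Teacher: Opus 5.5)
Your proposal is correct and follows the paper's proof essentially line for line: $K^{\text{pre}}=B$ and feasibility give $0<K^{\text{post}}\le K^{\text{pre}}$, Proposition~\ref{prop:value} gives $W^{\text{post}}\ge W^{\text{pre}}>0$, and the same two-step chain of inequalities finishes it. You also spell out, via $f_t, m_t, s_j>0$, why a nonempty post bundle has $K^{\text{post}}>0$, a step the paper leaves implicit.
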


\begin{prop}[Surplus change decomposes into intensive, entry, and exit margins]\label{prop:decomp}
For any pre- and post-agent selected task sets, the surplus change can be written as
\begin{align*}
\Delta\Pi
\equiv
\Pi^{\text{post}}-\Pi^{\text{pre}}
&=
\underbrace{\sum_{j\in A^{\text{pre}}\cap A^{\text{post}}}
\big(c_j^{\text{pre}}-c_j^{\text{post}}\big)}_{\text{intensive: cost saving}} \\
&\quad+
\underbrace{\sum_{j\in A^{\text{post}}\setminus A^{\text{pre}}}
\big(v_j-c_j^{\text{post}}\big)}_{\text{extensive: surplus from entry}} \\
&\quad-
\underbrace{\sum_{j\in A^{\text{pre}}\setminus A^{\text{post}}}
\big(v_j-c_j^{\text{pre}}\big)}_{\text{extensive: surplus from exit}}.
\end{align*}
The intensive term is cost savings on retained tasks and is weakly non-negative. The entry and exit terms are the net surplus from newly attempted and no-longer-attempted tasks.
\end{prop}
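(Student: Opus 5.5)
The decomposition is a pure accounting identity, so I will prove it by expanding definitions and partitioning index sets, with no use of optimality. Write $\Pi^{\text{pre}}=\sum_{j\in A^{\text{pre}}}(v_j-c_j^{\text{pre}})$ and $\Pi^{\text{post}}=\sum_{j\in A^{\text{post}}}(v_j-c_j^{\text{post}})$. These follow directly from the definitions of $W^{\mathcal{T}}$, $K^{\mathcal{T}}$ and $A^{\mathcal{T}}$, since $a^{\ast}_{\mathcal{T},j}$ is the indicator of $A^{\mathcal{T}}$.

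The key step is to split each selected set into disjoint pieces:
\[
A^{\text{pre}}=(A^{\text{pre}}\cap A^{\text{post}})\cup(A^{\text{pre}}\setminus A^{\text{post}}),\qquad
A^{\text{post}}=(A^{\text{pre}}\cap A^{\text{post}})\cup(A^{\text{post}}\setminus A^{\text{pre}}).
\]
Because the sums are finite, each surplus splits accordingly. When I subtract, the terms indexed by the intersection combine as $(v_j-c_j^{\text{post}})-(v_j-c_j^{\text{pre}})=c_j^{\text{pre}}-c_j^{\text{post}}$. The value terms cancel here because $v_j$ is mode-independent. The remaining pieces are exactly the entry sum over $A^{\text{post}}\setminus A^{\text{pre}}$, and the exit sum over $A^{\text{pre}}\setminus A^{\text{post}}$ entering with a minus sign. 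This gives the displayed identity. The identity holds for any pair of task sets, not just the optimal ones, which matches the phrase ``for any pre- and post-agent selected task sets.''

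For the sign claim on the intensive term, I observe that for every $j$,
\[
c_j^{\text{post}}=\min\{C(s_j;\Conversational),C(s_j;\Agent)\}\le C(s_j;\Conversational)=c_j^{\text{pre}}.
\]
Each summand is therefore nonnegative, and so is their sum, with the empty sum equal to zero. By Lemma~\ref{lem:sorting}, the inequality is strict exactly when $s_j>s^{\ast}$, which I can note as an interpretive remark.

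There is no genuine obstacle. The only care needed is bookkeeping: making sure the three index sets are disjoint and together cover $A^{\text{pre}}\cup A^{\text{post}}$, so that no task is double-counted or omitted. I will also remark that the entry and exit terms carry no sign restriction in general. For the optimal sets, every selected task can be taken to have nonnegative surplus only when that is implied by optimality, which the proposition does not assert.
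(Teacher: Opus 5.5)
Your proof is correct and follows the paper's own argument. You rewrite each $\Pi$ as a sum of $(v_j-c_j)$ over the selected set, then split into the intersection and the two set differences so the $v_j$ cancel on retained tasks, and sign the intensive term via $c_j^{\text{post}}\le c_j^{\text{pre}}$. Your side remarks are also consistent with the paper: strictness exactly when $s_j>s^{\ast}$ by Lemma~\ref{lem:sorting}, and no general sign on the entry and exit terms.
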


Task value is unobserved in our data, so the value, surplus, ratio, and decomposition results are not directly testable. The empirical analysis therefore focuses on observable cost and scope outcomes. Section~\ref{efficiency} tests cost structure assumptions and cost reduction. Section~\ref{scope} tests whether lower costs unlock more complex attempted work.

\section{Data}\label{data}

Perplexity Computer was released on February~25, 2026. Our main analysis covers the 3-month post-launch period February~27 through May~27, 2026, and draws on 8 samples. We follow the privacy procedure used in \cite{yang2025adoption} to ensure that no raw queries are exposed to human analysts and all results are reported in highly aggregated forms. In our empirical setting, Search instantiates the conversational mode and Computer instantiates the agent mode in the conceptual framework. We describe each of the 8 samples analyzed in the empirical sections below.

\paragraph{Adoption (Section~\ref{adoption}).} We use three samples for adoption analyses. The first is the full universe of Computer and Search queries over the 3-month post-launch window. For Search query growth, we split the user base into users who issued at least one Computer query in the window (``Computer users'') and users who did not (``non-Computer users''). The second sample is a random draw of 100{,}000 Computer queries from the same window, each labeled by an LLM along two dimensions: primary task category and subject-matter domain. The third sample supports the cross-product difference-in-differences analysis (Appendix~\ref{app:complementarity}): from 100{,}000 sampled Computer adopters we retain the 61{,}913 with at least one pre-adoption Search query and exact-match them to an approximately equal-sized control group of non-adopters on subscription tier, primary search topic, and pre-period Search intensity quartiles, yielding a balanced user-day panel of 123{,}699 users spanning February~13 through May~27, 2026.

\paragraph{Autonomy (Section~\ref{autonomy}).} The autonomy analysis rests on a matched-pair design that compares Computer and Search outcomes on sessions with near-identical initial user queries.\footnote{A session (or thread) consists of one or more related turns that share a common context. Within a session, the AI retains prior messages, tool outputs, and intermediate state, enabling coherent reasoning across turns rather than treating each message in isolation.
} We begin by identifying dual-product users (those who issued at least one Computer initial query, defined as the first message of a session, and at least one Search initial query during the window) and draw 100{,}000 such users. To ensure that every matched Computer query exercises Computer's full agentic capabilities, we require the Computer session to invoke at least one execution (``do'') tool such as code execution, browser actions, file creation, and external API calls that go beyond information retrieval and synthesis. Queries that do not invoke a ``do'' tool are functionally similar to Search and are excluded from the Computer side. For each sampled user we collect every qualifying Computer initial query and up to the 100 most recent Search initial queries, embed each query into a dense vector representation, and compute pairwise cosine similarity within the user. We then perform one-to-one greedy matching, retaining each Computer query's top-ranked Search neighbor, and keep pairs with similarity $> 0.99$.\footnote{We set the threshold to make the matched pairs as similar as possible while allowing for minor punctuation and formatting differences, such as whitespace and line breaks.} From the resulting pool of near-identical matches we randomly sample $10{,}000$ pairs (multiple pairs per user are possible). For sessions with multiple queries, we infer per-query user dissatisfaction from the content of the next turn. For the follow-up query analysis, we additionally draw a $1{,}000$-pair subsample restricted to pairs where both sessions have $\geq 2$~turns, and classify these into a 10-category taxonomy.

\paragraph{Efficiency (Section~\ref{efficiency}).} The efficiency analysis reuses the $10{,}000$ matched query pairs from the autonomy sample, augmented with execution-cost and human-time data. For the Search~+~Human counterfactual we use two independent human-time estimates. The tool-based estimate sums human-equivalent minutes for each ``do'' tool invocation observed in the Computer thread; ``search'' tools are treated as already provided by Search and contribute zero human minutes. The LLM-based estimate is an independent validation: for each of the $10{,}000$ pairs we prompt an LLM with the query text only, describing the Search~+~Human counterfactual, and elicit a total human-time estimate per task session. Both estimates are converted to cost by combining model cost with human labor cost at domain-specific hourly wages (BLS Occupational Employment and Wage Statistics, May 2025 release, mapped to the 18 query domains). A third, user-reported estimate comes from 45-minute semi-structured interviews with 25 active Computer users (6 enterprise, 19 consumer) recruited from those with $\geq 5$ historical queries; participants self-report their pre-Computer workflow time, cost, and before/after comparison.

\paragraph{Scope (Section~\ref{scope}).} Scope analyses use two paired samples drawn from the same dual-product population. The \emph{horizontal} sample identifies each user's primary occupation cluster by mapping every Search query in the window to one of the 8 most common clusters in our data (Digital Technology, Financial Services, Healthcare \& Human Services, Education, Public Service \& Safety, Management \& Entrepreneurship, Marketing \& Sales, Arts \& Design), then assigning the mode cluster as the user's primary occupation. We restrict to users active in both products and then sample exactly $1{,}000$ users per cluster, yielding $8{,}000$ users in total. For each sampled user we then pull all Search and Computer queries over the full 3-month window; each Computer query is assigned a destination cluster by a direct single-label LLM call (8 clusters + ``Other''), while each Search session is mapped deterministically via its topic domain. The \emph{vertical} sample is a paired query-level draw from a fixed set of $5{,}000$ dual-product users: for each user we randomly sample one ``do''-gated Computer initial query and one Search initial query within the window, yielding $10{,}000$ queries from the same users, a paired within-user comparison. Each query is then LLM-classified along four axes: cognitive complexity (Bloom's Revised Taxonomy and routine versus abstract task types), required knowledge breadth (which O*NET Knowledge domains the query requires), work-activity composability (which O*NET Generalized Work Activities, Intermediate Work Activities, Detailed Work Activities, and occupation-specific Task Statements the query engages), and new tasks unlocked (work activities that appear in the user's Computer queries but are essentially absent from the same user's Search queries).

\section{Adoption}\label{adoption}

\paragraph{Growth.} Figure~\ref{fig:adoption} shows Computer's growth trajectory over the 3-month post-launch period, using Search growth as a baseline for an established product. Each series is reported as a cumulative running total and indexed to its own week-1 cumulative (Feb~27--Mar~5 $= 1\times$). Cumulative Computer queries reached 84$\times$ by May~27. Over the same window, cumulative Search queries from Computer users reached 14$\times$, slightly above non-Computer users at 12$\times$.

The faster growth of Search among Computer users could in principle reflect either complementarity or selection; we estimate the causal effect with a matched difference-in-differences design that compares Computer adopters to non-adopters exactly matched on subscription tier, primary search topic, and pre-period Search intensity quartiles. Computer adoption increases daily Search queries by $1.05$, with consistent results across alternative staggered-adoption estimators. Appendix~\ref{app:complementarity} reports the full design, robustness checks, and intensive-margin estimates.

\begin{figure}[!htbp]
      \centering
      \includegraphics[width=\textwidth]{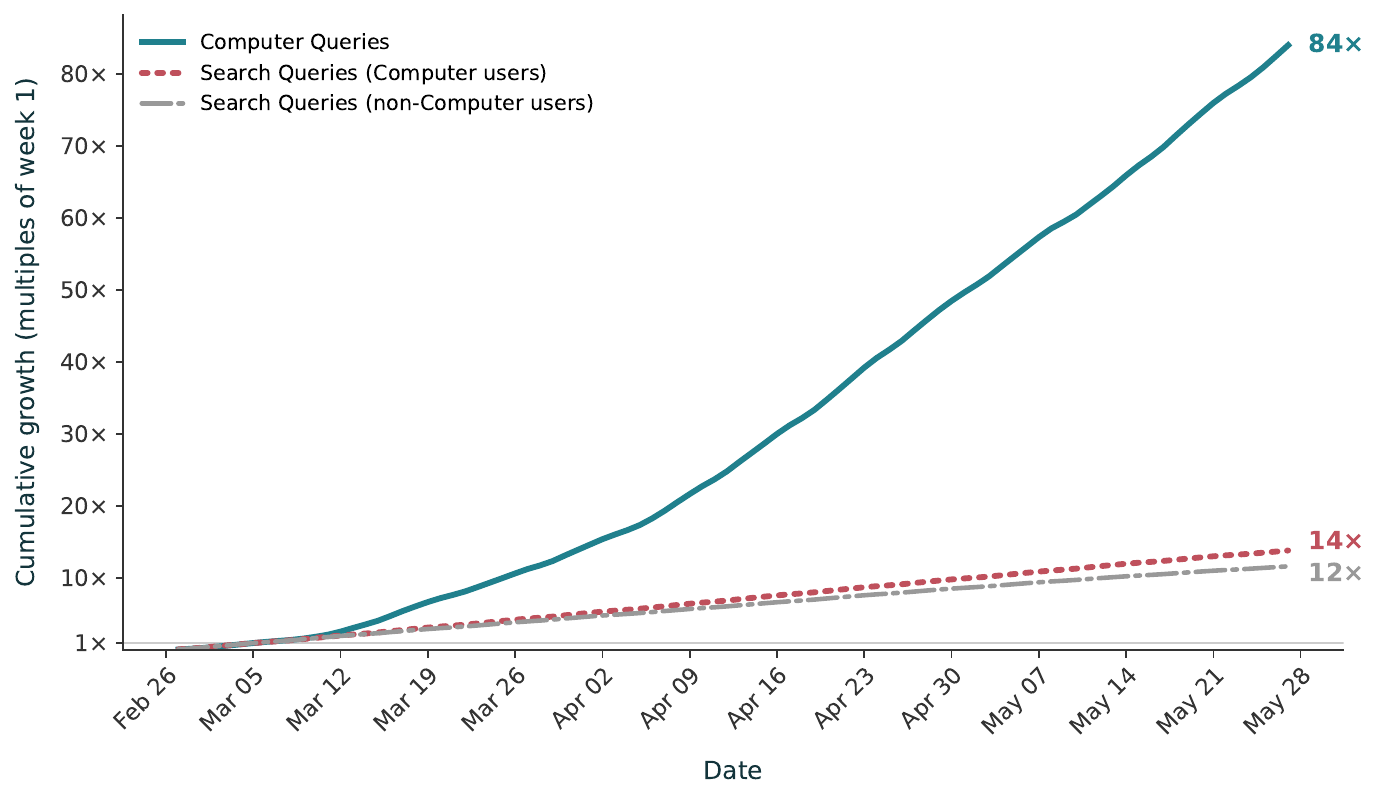}
      \caption{Cumulative adoption growth, with each series indexed to its own week-1
  cumulative total (Feb~27--Mar~5 $= 1\times$). Cumulative
  Computer queries reached 84$\times$. Cumulative Search queries from Computer users
  reached 14$\times$, slightly above non-Computer users at 12$\times$.}
      \label{fig:adoption}
  \end{figure}

\paragraph{What do people use Computer for?} To characterize the task distribution, we classify a random sample of 100{,}000 Computer queries along two dimensions: primary task category and subject-matter domain (Figure~\ref{fig:use_cases}). Research \& Analysis is the most common task category (25.8\%), followed by Document \& Asset Creation (18.6\%). Capabilities \& Product Discovery accounts for 5.3\%--6.0\% of queries and is decreasing over time as users transition from exploration to production.\footnote{The label appears on both classification axes: 5.3\% of queries have it as their \emph{task category}, and 6.0\% have it as their \emph{subject-matter domain}. The two sets overlap but are not identical, since task and domain are classified independently.} Subject-matter domains are broadly distributed across knowledge work: Software \& IT leads (13.8\%), followed by Finance \& Investing (10.8\%), Marketing \& Sales (7.6\%), General Business Operations (7.0\%), Healthcare \& Life Sciences (6.8\%), Education \& Academia (5.9\%), Legal \& Compliance (5.5\%), and Media \& Creative (5.1\%).

\begin{figure}[!htbp]
    \centering
    \includegraphics[width=\textwidth]{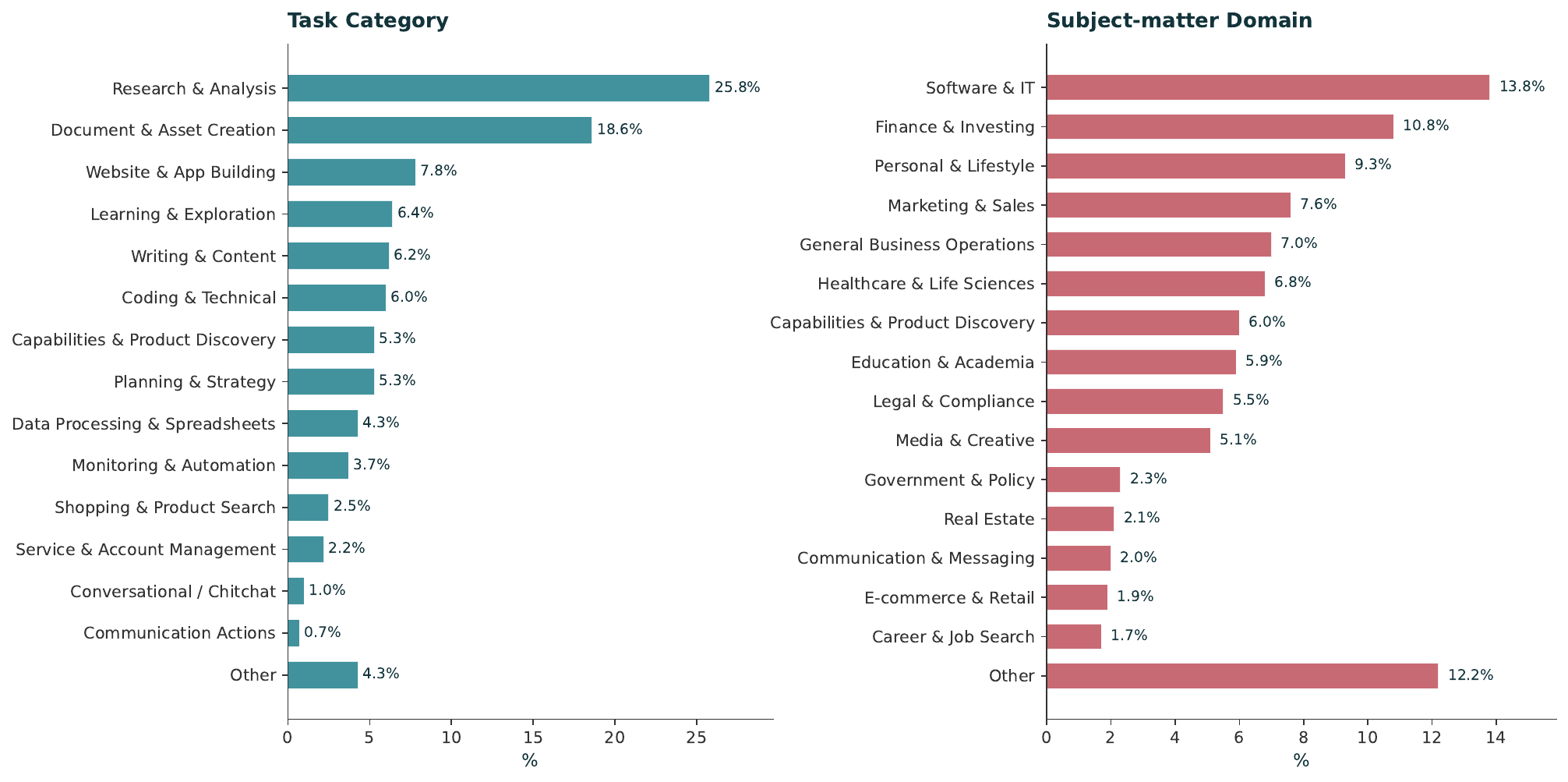}
    \caption{Use case distribution of Computer queries along two classification dimensions. \textbf{Left:}~Task category: Research \& Analysis (25.8\%) and Document \& Asset Creation (18.6\%) dominate. \textbf{Right:}~Subject-matter domain: usage is broadly distributed across 15+ subject areas. Within each panel, categories are sorted by share (descending), with Other placed last. Percentages may not sum exactly to 100\% due to rounding.}
    \label{fig:use_cases}
\end{figure}

\section{Autonomy}\label{autonomy}

A core design principle of Computer is to complete tasks autonomously, executing multi-step workflows with minimal human intervention. To measure how well it achieves this, we compare Computer sessions to Search sessions on the same initial queries from the same users. While both query and session (a thread of related queries) can serve as proxies for a task unit, Search and Computer may induce different workflows as captured by the follow-up queries. We therefore use session as the primary unit of analysis, and also report query-level results where relevant. 

\subsection{Method}

A direct comparison between Computer and Search is complicated by endogenous task selection: users may send different types of queries to each product, as predicted by the sorting behavior in Lemma~\ref{lem:sorting}. Ideally, we would compare outcomes for the same task performed with Computer versus Search. We therefore exploit user-generated natural experiments in which the same user submits near-identical initial queries (first message in a session) to both products.  

We identify users who submitted at least one initial query to both Computer and Search between February~27 and May~27, 2026. From these, we sample 100{,}000 users. We restrict the Computer side to initial queries that invoked at least one execution (``do'') tool (e.g., browser actions, code execution, file writes, drive uploads, external connector calls), so that every matched Computer session actually performs autonomous work rather than returning a chat-style response. For each sampled user, we collect all qualifying initial Computer queries and up to 100 most recent initial Search queries. We embed each query and compute pairwise cosine similarity within each user's query sets. We then perform one-to-one greedy matching and sample 10{,}000 near-identical pairs (cosine similarity $> 0.99$), which provides a clean control for task content by comparing effectively the same task attempted through both products. The task domain of a session is labeled by the primary domain of the initial query.

\subsection{Results}

We first compare autonomy by measuring execution time, the rate of model pauses and user stops, and the number of connector calls to external apps.

\paragraph{Execution time.} The defining feature of Computer's autonomy is the machine work it performs between user turns. For Computer, we compute per-turn wall-clock time\footnote{Many Computer sessions involve parallel task execution, so wall-clock time reflects user experience and underestimates total machine time.} from the user's submission timestamp to the last LLM-response end within the turn, summed across turns, and capped at three hours per session to reduce the influence of outliers\footnote{Less than 5\% of Computer sessions exceed the three-hour limit due to a combination of rare long-running jobs, recurring jobs, system retries, and other edge cases.
}; this captures both model reasoning and downstream tool-execution time. For Search, we similarly use the end-to-end server latency from query receipt to last token summed over turns (covering retrieval, reasoning, and generation).

Averaging across matched pairs, Computer sessions run 26~minutes of wall-clock execution, versus 33~seconds for Search, a 48$\times$ ratio.\footnote{Ratios in this section are computed from unrounded values and may differ from ratios implied by the rounded numbers shown.} An average Computer session contains 5.3 queries versus 2.8 for Search, yielding a per-query runtime gap of 25$\times$. The gap is also visible at the distribution level (Figure~\ref{fig:autonomy_dist}): Computer and Search per-session runtimes barely overlap, with Search concentrated near 10--30~seconds and Computer spread across roughly 5~minutes to over an hour (median 9~minutes vs.\ 14~seconds, a 40$\times$ gap).

The execution-time ratio also varies substantially by domain (Figure~\ref{fig:autonomy}), reflecting the difference in the nature of tasks across domains. Local shows the largest gap (75$\times$), followed by Politics (67$\times$), Finance (64$\times$), and Business (60$\times$). Science (26$\times$) and Education (27$\times$) show the smallest ratios, because Search's per-turn responses already suffice for common tasks like concept explanation. Technology and Business, the two largest categories by volume, show 58$\times$ and 60$\times$ ratios, with Computer averaging 27 and 31~minutes versus 28 and 31~seconds for Search.

\begin{figure}[h]
    \centering
    \includegraphics[width=0.8\textwidth]{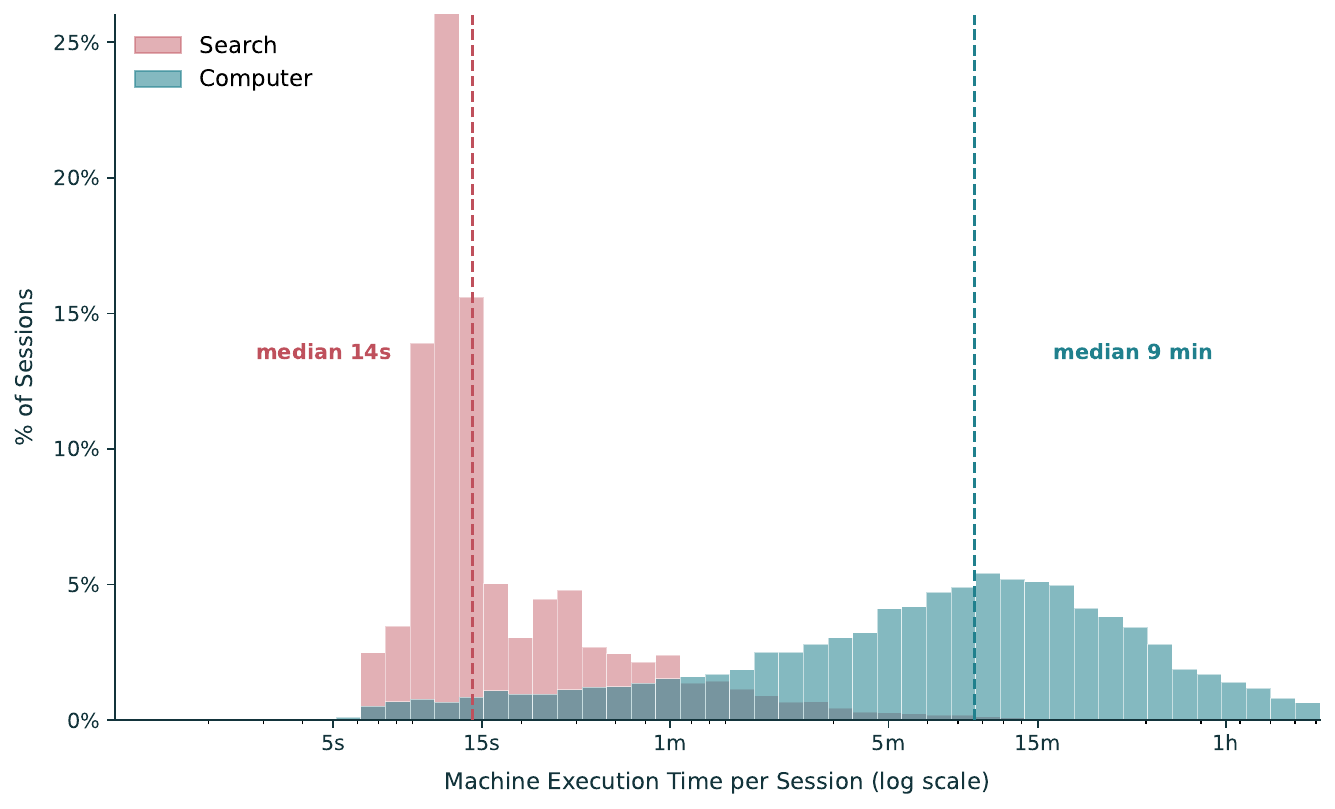}
    \caption{Distribution of per-session machine execution time for Computer vs.\ Search. Each session contributes one value: the total machine execution time summed across all turns. Computer is per-turn wall-clock from user submission to the last LLM-response end, summed across turns and capped at three hours; Search is end-to-end server latency from query receipt to last token, summed across turns. The two distributions barely overlap: Search is a tight mass near 10--30~seconds, while Computer is a wide, right-skewed distribution centered near 9~minutes, with a 40$\times$ gap in medians (9m vs.\ 14s; ratio computed from unrounded values).}
    \label{fig:autonomy_dist}
\end{figure}

\begin{figure}[!htbp]
    \centering
    \includegraphics[width=.8\textwidth]{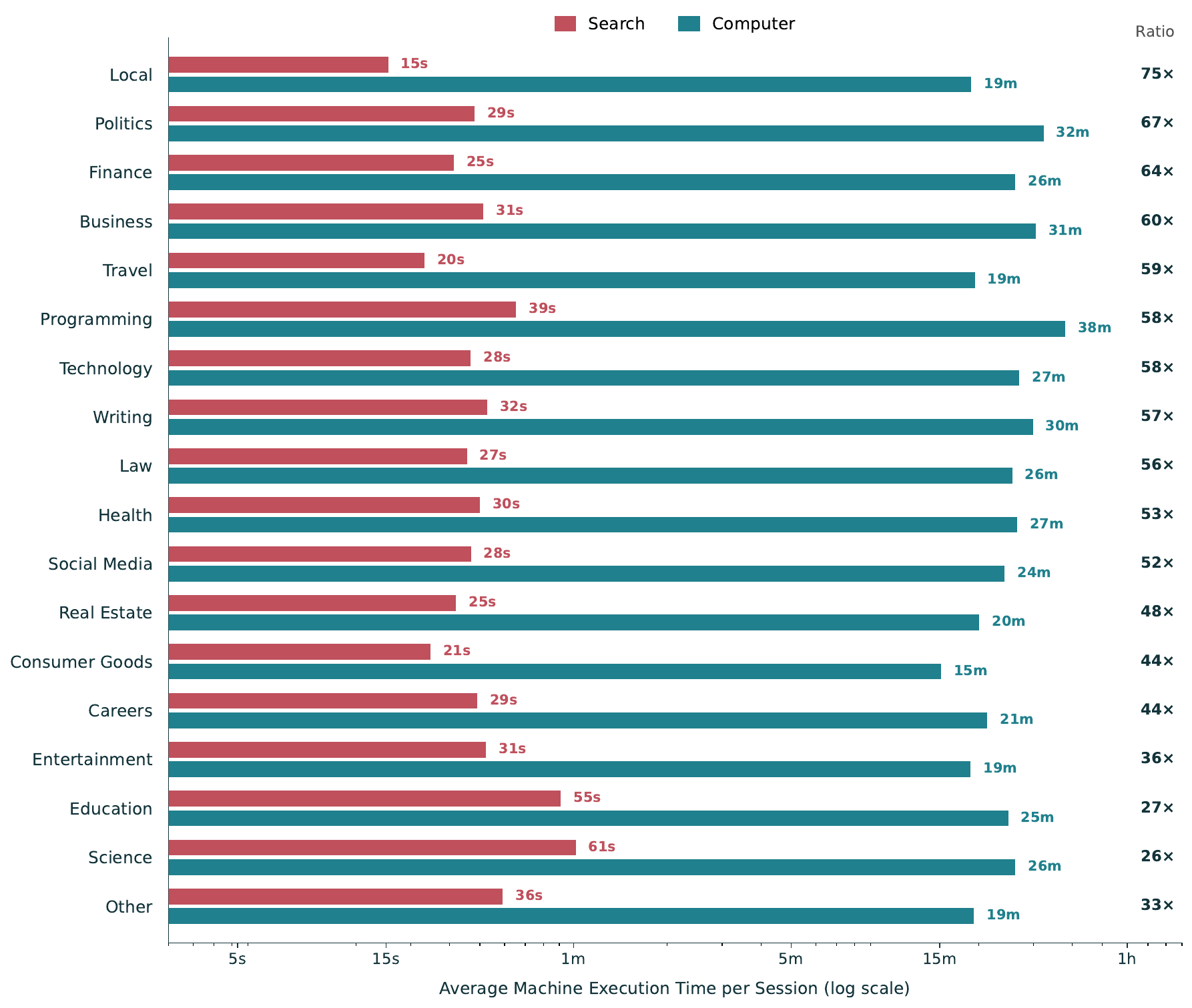}
    \caption{Level of autonomy: average machine execution time per session across 18~domains, for Computer vs.\ Search. Computer sessions are restricted to those that invoked at least one execution (``do'') tool. The ratio column reports Computer/Search execution time per session. Across all domains, Computer performs roughly 26--75$\times$ as much machine work per session as Search. Domains are sorted by the Computer/Search ratio (descending), with Other placed last. Ratios are computed from unrounded means and may differ from those implied by the rounded times shown.}
    \label{fig:autonomy}
\end{figure}

\paragraph{Model pauses and user stops.} At the query level, 13\% of Computer queries invoke at least one pause-for-user tool during execution, versus 0.3\% of Search queries; at the session level, the rates rise to 38\% versus 0.8\%. Because Computer performs longer-running actions, it more often pauses to request critical input from the user, ensuring that the final output matches the user's intent. At the session level, pauses are dominated by approval prompts (24.2\% of sessions), followed by open clarifying questions (16.9\%), structured input requests (2.3\%), and file-upload waits (0.7\%).\footnote{A session can contain multiple pause types, so these shares need not sum to the 38\% any-pause rate.} Despite Computer's longer runtimes, user-initiated interruptions are similar across products: 3.7\% of Computer sessions contain at least one user stop event versus 3.4\% of Search sessions. Users are therefore no more likely to abandon a long autonomous Computer run than a short Search response, suggesting that the autonomous execution is largely trusted to run to completion once launched.

\paragraph{Connector calls.} Another indicator of autonomy is whether Computer chains many external-tool calls via Model Context Protocol (MCP) or an Application Programming Interface (API) endpoint into a single session, work that a Search user would otherwise operate manually through separate apps. 7.9\% of Computer sessions invoke at least one connector call versus 1.8\% of Search sessions, a 4$\times$ gap, and the mean number of connector calls per session is 1.19 for Computer vs.\ 0.10 for Search, a 12$\times$ ratio. For sessions in which a connector is invoked, Computer fires 15.0 calls per session on average versus 5.5 for Search. At the pair level, of the 914 pairs in which at least one side used a connector, only Computer used one in 80\% (735 pairs). The pattern is most extreme for Finance (23\% Computer sessions vs.\ 1.2\% Search), Technology (10\% vs.\ 3\%), and Business (9\% vs.\ 2\%). This pattern suggests that Computer's autonomy also operates with greater context integration over a larger input-output surface, enabling it to retrieve data from and take actions across a broader set of external services.

\paragraph{What do follow-up turns contain?} To characterize how user interaction differs with greater autonomy, we examine the content of follow-up turns in the matched sessions. We sample 1{,}000 matched pairs where both products required multiple turns and classify all 15{,}507 follow-up queries (7{,}093 on the Search side, 8{,}414 on the Computer side) with an LLM against a single 10-category taxonomy (Table~\ref{tab:followup_taxonomy}). Three patterns emerge. First, the overall propensity toward task advancement is near-identical across products (drill-downs, new subtasks, and extensions together account for 52.7\% of Computer follow-ups versus 52.9\% for Search), but its composition shifts: relative to Search, Computer substitutes extensions for drill-downs, with drill-downs falling from 23.4\% to 22.0\% ($-1.4$~pp) and extensions rising from 12.5\% to 14.2\% ($+1.7$~pp).\footnote{The two differ in what they ask the system to do: a \emph{drill-down} is an information-seeking question about the existing output (``why did you choose this?'', ``how does this compare?''), which keeps the user in a clarification loop, whereas an \emph{extension} requests a new component that builds on the delivered output (``now also add X''), pushing the artifact further.} Because Computer returns a more complete deliverable up front, its users spend relatively fewer follow-ups clarifying the result and more extending it. Second, Computer users spend slightly more of their follow-ups reviewing and revising autonomous output: revision and verification together account for 24.6\% of Computer follow-ups versus 23.6\% for Search ($+1.0$~pp). Third, Search is heavier on lightweight continuation: confirmations, format-delivery, and retry requests together account for 11.6\% of Search follow-ups versus 9.9\% for Computer ($-1.7$~pp), short directives that Computer integrates into its initial run. Manual data inputs are essentially identical (8.5\% vs.\ 8.4\%), as expected when the matched-pair design controls for task content. Matched Computer and Search sessions therefore contain human turns for different reasons: Search turns reflect shorter digest-and-execute loops, whereas Computer turns reflect longer review-and-extend loops.

\begin{table}[!htbp]
\centering
\small
\begin{tabular}{lrrr}
\toprule
Category & Search (\%) & Computer (\%) & $\Delta$ (pp) \\
\midrule
\textit{Task advancement} & & & \\
\quad Drill-down       & 23.4 & 22.0 & $-1.4$ \\
\quad New subtask      & 17.0 & 16.5 & $-0.5$ \\
\quad Extension        & 12.5 & 14.2 & $+1.7$ \\
\textit{\quad Subtotal} & \textit{52.9} & \textit{52.7} & \textit{$-0.2$} \\
\midrule
\textit{Output review / iteration} & & & \\
\quad Revision         & 13.9 & 14.1 & $+0.2$ \\
\quad Verification     &  9.7 & 10.5 & $+0.8$ \\
\textit{\quad Subtotal} & \textit{23.6} & \textit{24.6} & \textit{$+1.0$} \\
\midrule
\textit{Manual input} & & & \\
\quad Data input       &  8.4 &  8.5 & $+0.1$ \\
\midrule
\textit{Short directives} & & & \\
\quad Confirmation     &  6.8 &  6.5 & $-0.3$ \\
\quad Format-delivery  &  3.4 &  2.7 & $-0.7$ \\
\quad Retry            &  1.4 &  0.7 & $-0.7$ \\
\textit{\quad Subtotal} & \textit{11.6} & \textit{9.9} & \textit{$-1.7$} \\
\midrule
\textit{Other} & & & \\
\quad Unclassified     &  3.5 &  4.3 & $+0.8$ \\
\midrule
Total queries & 7{,}093 & 8{,}414 & \\
\bottomrule
\end{tabular}
\caption{Follow-up query composition for Computer vs.\ Search in the 1{,}000-pair multi-turn sample classified against a single 10-category taxonomy. Group subtotals (italics) sum the constituent categories. Task advancement is near-identical overall (52.7\% vs.\ 52.9\%), but its composition shifts from drill-downs toward extensions for Computer (extensions 14.2\% vs.\ 12.5\%; drill-downs 22.0\% vs.\ 23.4\%); Computer is also slightly higher on output review/iteration overall (24.6\% vs.\ 23.6\%), while Search is heavier on short directives (11.6\% vs.\ 9.9\%). Manual data inputs are matched ($\approx$8.5\%), consistent with the matched-pair design controlling for task content. Columns may not sum exactly to 100\% due to rounding.}
\label{tab:followup_taxonomy}
\end{table}

\paragraph{User satisfaction.} Lastly, we investigate if higher autonomy is associated with lower user satisfaction. Higher autonomy could in principle come at a quality cost: users might issue more dissatisfied follow-ups to Computer's autonomous output than to Search's chat response. We test this directly by scoring each response's user dissatisfaction on the scale \{zero, low, mid, high\} based on what the user does next (re-asks, corrections, error reports, retries, etc.). The dissatisfaction signal is restricted to multi-turn sessions. Computer elicits less dissatisfaction at every level (Table~\ref{tab:user_satisfaction}): the mid+high rate is 1.3\% for Computer versus 2.9\% for Search, and any dissatisfaction (low+mid+high) is 10.8\% versus 16.6\%. Computer's autonomous execution thus increases autonomy and response quality simultaneously.

\begin{table}[!htbp]
\centering
\small
\begin{tabular}{lrrr}
\toprule
Next-turn dissatisfaction signal & Search (\%) & Computer (\%) & $\Delta$ (pp) \\
\midrule
zero (no dissatisfaction)            & 83.4 & 89.2 & $+5.8$ \\
low (mild)                            & 13.7 &  9.6 & $-4.1$ \\
mid + high (meaningful)               &  2.9 &  1.3 & $-1.6$ \\
\midrule
\textit{Any dissatisfaction (low+mid+high)} & \textit{16.6} & \textit{10.8} & \textit{$-5.8$} \\
\bottomrule
\end{tabular}
\caption{Next-turn dissatisfaction rates for matched Computer vs.\ Search responses among sessions with multiple turns. Each response is scored on the scale \{zero, low, mid, high\} based on what the user does next (re-asks, corrections, error reports, retries, etc.). Computer's autonomous responses elicit measurably lower dissatisfaction than Search's chat responses across every level of the signal. Columns may not sum exactly to 100\% due to rounding.}
\label{tab:user_satisfaction}
\end{table}

\section{Efficiency}\label{efficiency}

Computer's autonomous execution shifts human labor from manual work to oversight. To quantify this shift, we estimate the human and compute time and cost required to complete each matched task under two regimes: (1)~Search~+~Human, where Search handles information retrieval and processing but the human must manually perform all non-research actions\footnote{Here, ``manual'' means no access to other AI tools; use of standard software is still permitted.}, and (2)~Computer~+~Human, where the model executes the full workflow and the human provides only oversight. We also test the assumptions and lemmas in Section~\ref{framework}.

\subsection{Method}
It can be challenging to estimate the human time spent on a task in a field setting, so we triangulate via three independent approaches: tool-based estimate, LLM-based estimate, and user self-reports from interviews. The first two approaches are based on production data.

\paragraph{Tool-based estimate.}
We classify Computer's tool calls into two categories based on what a Search user would still need to do manually (Table~\ref{tab:tool_classification}):

\begin{itemize}
    \item \textbf{``Search'' tools}: research actions that Search already handles. These include \texttt{search\_web}, \texttt{fetch\_url}, \texttt{search\_vertical}, \texttt{navigate} (browsing), \texttt{read} (document analysis), and result-submission tools (\texttt{submit\_answer}, \texttt{submit\_analysis}, \texttt{submit\_result}).
    \item \textbf{``do'' tools}: actions the human must execute themselves, since Search provides only information. These include, for instance, \texttt{bash} (terminal commands), \texttt{write} (create files), \texttt{edit} (modify files), \texttt{js\_repl} (run code), \texttt{computer} in browser-task mode (web app interaction), \texttt{call\_external\_tool} (connector calls), and \texttt{share\_file} (export deliverables).
\end{itemize}

For each of the 10{,}000 matched Computer sessions, we sum the estimated human-equivalent minutes from ``do'' tool calls (Table~\ref{tab:tool_classification}). This represents the manual work a Search user would still need to perform after receiving Search's answer. Computer's human time is fixed at 10~minutes of oversight per task (e.g., writing the prompt and reviewing the output). For cost, we combine model cost with human labor cost. Human labor cost uses Bureau of Labor Statistics Occupational Employment and Wage Statistics (BLS OEWS, May 2025, the most recent available) mean hourly wages mapped to each query domain (Table~\ref{tab:wages}).

\begin{table}[!htbp]
\centering
\small
\begin{tabular}{llr}
\toprule
Tool & Manual equivalent for a Search user & Min/call \\
\midrule
\textit{``Search'' tools: already covered by Search} & & \\
\quad \texttt{search\_web}        & Run a web search                                  & 0 \\
\quad \texttt{fetch\_url}         & Open a URL and read its contents                  & 0 \\
\quad \texttt{search\_vertical}   & Query a domain-specific index (news, academic)    & 0 \\
\quad \texttt{navigate}           & Click through linked pages                        & 0 \\
\quad \texttt{read}               & Read and extract from a document                  & 0 \\
\quad \texttt{submit\_*}          & Compose the final written answer                  & 0 \\
\addlinespace
\textit{``Do'' tools: the human must execute} & & \\
\quad \texttt{bash}               & Run a terminal command                            & 5 \\
\quad \texttt{write}              & Author a new file from scratch                    & 15 \\
\quad \texttt{edit}               & Locate and modify text in an existing file        & 10 \\
\quad \texttt{js\_repl}           & Write and execute a code snippet                  & 15 \\
\quad \texttt{computer} (browser) & Click/type in a web application                   & 0.5 \\
\quad \texttt{browser\_task}      & Complete a multi-step browser workflow            & 10 \\
\quad \texttt{call\_external\_tool} & Issue an API call (auth, request, parse)        & 5 \\
\quad \texttt{deploy\_website}    & Deploy code to a host or server                   & 5 \\
\quad \texttt{start\_server}      & Spin up a local dev server or service             & 5 \\
\quad \texttt{find}               & Locate a file or string in a project              & 3 \\
\quad \texttt{system\_diagnostic} & Inspect system state, check logs                  & 3 \\
\quad \texttt{share\_file}        & Export and deliver a file                         & 2 \\
\bottomrule
\end{tabular}
\caption{Tool classification used in the tool-based estimate. ``Search'' tools mirror capabilities Search already provides, so no manual time is charged. ``Do'' tools require the human counterfactual user to act on Search's research output. Per-call minute estimates approximate the time a skilled professional would spend performing the equivalent action by hand.}
\label{tab:tool_classification}
\end{table}

\begin{table}[!htbp]
\centering
\small
\begin{tabular}{llr}
\toprule
BLS occupation group & Query domains mapped & Wage (\$/hr) \\
\midrule
Legal                 & Law                                          & 67 \\
Software \& IT        & Technology, Programming                      & 58 \\
Healthcare            & Health                                       & 52 \\
Business              & Business                                     & 46 \\
Finance \& Investing  & Finance                                      & 46 \\
Science \& Research   & Science                                      & 45 \\
Writing \& Research   & Writing                                      & 38 \\
Media \& Creative     & Entertainment                                & 38 \\
Social Media          & Social Media                                 & 38 \\
Education             & Education                                    & 32 \\
Career \& Jobs        & Careers                                      & 30 \\
Government \& Policy  & Politics                                     & 30 \\
Personal \& Lifestyle & Travel, Consumer Goods, Local                & 25 \\
Real Estate           & Real Estate                                  & 25 \\
Other                 & Other                                        & 34 \\
\bottomrule
\end{tabular}
\caption{Hourly wage estimates used for cost calculations. Wages are BLS OEWS mean hourly wages (rounded to whole dollars), May 2025 release, mapped from BLS major occupation groups to the 18 query domains. The same domain-specific wage is applied to human time spent with Search and Computer. Rows are sorted by hourly wage (descending), with Other placed last.}
\label{tab:wages}
\end{table}

\paragraph{LLM-based estimate.} The tool-time mapping relies on two key assumptions: 1) accurate per-tool human-equivalent time estimates and 2) humans follow the same procedure as Computer. As an independent validation, we estimate the total human time required for each of the 10{,}000 matched pairs, given only the query text. The prompt describes the Search~+~Human counterfactual: a skilled professional receives research answers from Search but must perform all execution steps manually. The LLM returns a time estimate per task.

\paragraph{User-reported estimate.} Both production-data approaches fix the counterfactual to Search~+~Human, which may not capture all realistic pre-agent workflows. To capture a richer counterfactual, we conducted 45-minute semi-structured interviews with 25 active Computer users (6~enterprise, 19~consumer), recruited from those with at least 5 historical queries. Participants walked through specific completed tasks, described their pre-Computer workflow for each, and estimated the time that workflow would have taken. Self-reports are subject to recall and selection bias, but unlike the matched-pair estimates they reflect the counterfactual users actually would have chosen rather than the Search~+~Human baseline. These interviews are summarized by theme in Appendix~\ref{app:highlights}.

\subsection{Results}

\paragraph{Tool-based estimates.} Using the tool-based approach, Computer reduces both time and cost substantially across all 18 domains (Figure~\ref{fig:efficiency}, Table~\ref{tab:domain_multipliers}). The mean Search~+~Human task requires 121--596~minutes of manual execution (269~minutes overall), compared to 25--48~minutes for Computer~+~Human (36~minutes overall). Computer saves 79--92\% of task time (overall 87\%), with reductions largest for the most labor-intensive domains such as Programming (596 vs.\ 48~min, 92\% saved), Technology (280 vs.\ 37~min, 87\%), and Social Media (224 vs.\ 34~min, 85\%). The cost savings are similarly large: 87--96\% (overall 94\%). Cost savings exceed time savings because domain-specific wages amplify the effect: in high-wage domains like Programming (\$58/hr), a 92\% time reduction yields a 96\% cost reduction. Despite Computer's higher model costs (\$2--13 per task versus \$0.05 for Search), human labor dominates: model costs account for less than 0.1\% of the Search~+~Human total, while they constitute 31--59\% of Computer~+~Human's cost.

\begin{figure}[!htbp]
    \centering
    \includegraphics[width=\textwidth]{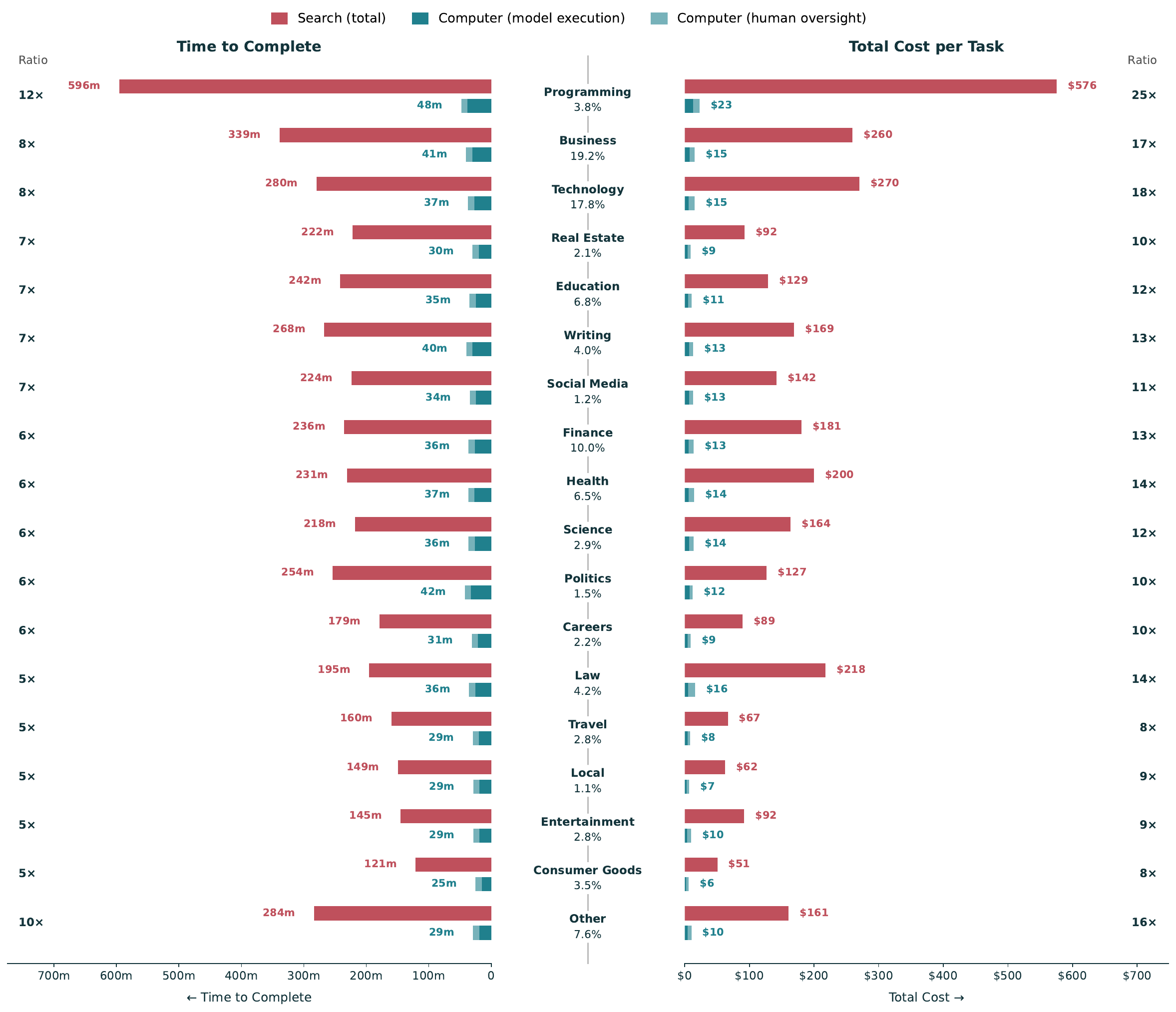}
    \caption{Task efficiency: time and cost for Search~+~Human vs.\ Computer~+~Human. Search~+~Human is dominated by human time and cost so the combined total is reported. \textbf{Left:} total time. \textbf{Right:} total cost. Despite higher model costs, Computer~+~Human saves 87--96\% on total cost because it shifts expensive human execution toward machine computation and oversight. Domains are sorted by the Search~+~Human / Computer~+~Human time ratio (descending), with Other placed last. Ratios are computed from unrounded means
and may differ from those implied by the rounded times and costs shown.}
    \label{fig:efficiency}
\end{figure}

\paragraph{Breakeven and sensitivity analysis.} How fast would a human need to perform the execution steps for Search~+~Human to match Computer~+~Human's total cost? Setting the two costs equal yields a breakeven threshold of 14--24~minutes across all domains (median 18~minutes). A professional would need to run all commands, edit all files, and navigate all web applications in under $20$~minutes to match Computer~+~Human's total cost.

The result is robust to two human time assumptions. First, even if our per-tool human-equivalent time estimates are overstated by $16\times$, Computer~+~Human retains a cost advantage on average ($8\times$ in the tightest domain). Second, the 10-minute human oversight assumption can be inflated by $26\times$ (to 260~minutes) before Computer~+~Human's cost advantage disappears on average ($12\times$ in the tightest domain). Figure~\ref{fig:cost_sensitivity} in Appendix~\ref{app:cost_sensitivity} plots how the cost advantage erodes as we make the assumptions increasingly conservative. The time advantage is robust to the same assumptions, though it is tighter than the cost advantage: it survives per-tool overstatement to $7\times$ overall ($5\times$ in the tightest domain) and oversight inflation to $24\times$ overall ($11\times$ in the tightest domain), as shown in Figure~\ref{fig:time_sensitivity} in Appendix~\ref{app:time_sensitivity}.

\paragraph{LLM-based estimate.} The LLM approach confirms the tool-based findings (Table~\ref{tab:domain_multipliers}). The LLM-based estimates yield similar advantages: 79--88\% time saved (overall 84\%) and 88--94\% cost saved (overall 93\%) across all 18 domains. The two methods yield task-level estimates of comparable magnitude: mean human time is 269~minutes under the tool-based approach and 227~minutes under the LLM-based approach. LLM estimates include time costs the tool mapping cannot measure, such as planning and digesting Search responses. The tool mapping might also be sensitive to which tools each domain's tasks invoke, while the LLM judges from query text alone. Overall, the two approaches agree that Computer~+~Human delivers large savings across every domain: tool-based 79--92\% time and 87--96\% cost; LLM 79--88\% time and 88--94\% cost.

\begin{table}[!htbp]
\centering
\small
\begin{tabular}{lrcccc}
\toprule
 & & \multicolumn{2}{c}{Tool-based} & \multicolumn{2}{c}{LLM-based} \\
\cmidrule(lr){3-4} \cmidrule(lr){5-6}
Task Domain & $n$ & Time Save (\%/$\times$) & Cost Save (\%/$\times$) & Time Save (\%/$\times$) & Cost Save (\%/$\times$) \\
\midrule
Programming & 378 & 92 (12) & 96 (25) & 81 (5.4) & 91 (11) \\
Business & 1{,}922 & 88 (8.4) & 94 (17) & 84 (6.3) & 92 (13) \\
Technology & 1{,}782 & 87 (7.5) & 94 (18) & 83 (5.8) & 93 (14) \\
Real Estate & 210 & 86 (7.4) & 90 (10) & 88 (8.2) & 91 (12) \\
Education & 677 & 86 (6.9) & 92 (12) & 85 (6.5) & 91 (11) \\
Writing & 400 & 85 (6.7) & 92 (13) & 81 (5.2) & 90 (10) \\
Social Media & 120 & 85 (6.5) & 91 (11) & 79 (4.8) & 88 (8.1) \\
Finance & 999 & 85 (6.5) & 93 (13) & 85 (6.5) & 93 (13) \\
Health & 654 & 84 (6.3) & 93 (14) & 86 (7.0) & 94 (15) \\
Science & 295 & 83 (6.0) & 91 (12) & 88 (8.5) & 94 (16) \\
Politics & 145 & 83 (6.0) & 90 (10) & 83 (6.0) & 90 (10) \\
Careers & 220 & 82 (5.7) & 90 (9.8) & 85 (6.5) & 91 (11) \\
Law & 420 & 82 (5.5) & 93 (14) & 86 (7.3) & 94 (18) \\
Travel & 277 & 82 (5.4) & 87 (8.0) & 85 (6.9) & 90 (10) \\
Local & 111 & 81 (5.2) & 89 (8.9) & 84 (6.4) & 91 (11) \\
Entertainment & 276 & 80 (5.0) & 89 (9.5) & 84 (6.4) & 92 (12) \\
Consumer Goods & 353 & 79 (4.8) & 88 (8.0) & 83 (5.8) & 90 (9.7) \\
Other & 761 & 90 (9.7) & 94 (16) & 82 (5.6) & 89 (9.1) \\
\midrule
\textbf{Overall} & 10{,}000 & 87 (7.4) & 94 (16) & 84 (6.3) & 93 (13) \\
\bottomrule
\end{tabular}
\caption{Percentage of time and cost saved by Computer~+~Human relative to Search~+~Human, with multipliers in parentheses (e.g., 94\% (16$\times$) means Computer~+~Human is 94\% or 16 times cheaper). Human labor cost uses BLS OEWS May 2025 mean hourly wages. Rows are sorted by tool-based time savings (descending), with Other placed last. Percentages and multipliers are computed from unrounded values and may differ from those implied by the rounded figures shown.}
\label{tab:domain_multipliers}
\end{table}

\paragraph{User-reported estimates.} As a third validation, we draw on semi-structured interviews with 25 Computer users (6~enterprise, 19~consumer) who were asked to describe specific tasks and estimate time saved. Among participants who provided quantifiable before/after comparisons, self-reported speedups range from $5\times$ to over $300\times$, likely reflecting substantial variation in pre-Computer baselines, with a per-participant median of approximately $25\times$. Representative examples are summarized in Appendix~\ref{app:highlights_efficiency}.\\

\noindent Next, we connect the cost estimates to Assumptions~\ref{asmp:fixed} and~\ref{asmp:marginal} and Lemma~\ref{lem:sorting} in Section~\ref{framework}. We proxy step count $s$ by the total number of Computer tool calls (both ``Search'' and ``do'' tools) per task.

\paragraph{Fixed cost.} We proxy the per-task fixed cost by total query characters per session (summing the initial query and any user-issued follow-ups), which captures prompt-writing effort across the full task rather than only the opening turn.\footnote{This approach does not capture the verification cost for Computer outputs so its fixed cost might be underestimated.} We compare within matched pairs to hold task content fixed; Computer sessions are about $46\%$ longer than Search sessions at the median ($652$ vs.\ $448$ characters). This gap is consistent with Assumption~\ref{asmp:fixed} ($f_{\Agent} > f_{\Conversational} > 0$): delegating a whole workflow requires more up-front scoping than issuing a one-shot question.

\paragraph{Marginal cost.} Across the matched pairs, total variable cost divided by total tool calls yields \$0.16 per step for Computer~+~Human and \$2.05 per step for Search~+~Human, a $13\times$ gap. The same pattern holds in time: total task execution minutes divided by total tool calls yields $0.46$ minutes per step for Computer~+~Human and $2.66$ minutes per step for Search~+~Human, a $6\times$ gap. Both comparisons support Assumption~\ref{asmp:marginal} ($m_{\Conversational} > m_{\Agent} > 0$).

\paragraph{How Computer's efficiency advantage scales with task steps.} To trace how the gap scales with task complexity, we regress the within-pair wedge between Search~+~Human and Computer~+~Human on $N_{\text{total}}$ tool calls, separately for time and for cost (Table~\ref{tab:cost_regressions}). Both slopes are positive: each additional step widens the time gap by $2.39$ minutes and the cost gap by $\$1.94$. This is consistent with the cost structure underlying Lemma~\ref{lem:sorting}: because the wedge is strictly increasing in~$s$, longer tasks reap larger savings under Computer~+~Human, the mechanism that in the model generates the sorting threshold $s^{\ast} = (f_{\Agent} - f_{\Conversational})/(m_{\Conversational} - m_{\Agent})$.\footnote{Using independent LLM-based estimates of time and cost to construct the wedge yields consistently positive and significant results.}

\begin{table}[!htbp]
\centering
\small
\begin{tabular}{lcc}
\toprule
                          & (1)                & (2) \\
                          & Time wedge (min)   & Cost wedge (\$) \\
\midrule
$s = N_{\text{total}}$    & $2.39^{***}$       & $1.94^{***}$ \\
                          & $(0.065)$          & $(0.057)$ \\
\addlinespace
Observations              & 10{,}000            & 10{,}000 \\
$R^{2}$                   & 0.72               & 0.68 \\
\bottomrule
\end{tabular}
\caption{Within-pair wedges (Search~+~Human minus Computer~+~Human) regressed on total Computer tool-call count $s = N_{\text{total}}$ across matched pairs. Each column reports a separate regression: column~(1) the time wedge in minutes; column~(2) the cost wedge in dollars. Heteroskedasticity-robust standard errors in parentheses. Significance: $^{*}\,p<0.10$; $^{**}\,p<0.05$; $^{***}\,p<0.01$.}
\label{tab:cost_regressions}
\end{table}

\section{Scope}\label{scope}

The previous sections use within-user, within-task comparison to show Computer's autonomous execution saves time and cost. Now we turn to within-user, cross-task analysis to test if Computer also enables users to expand their work scopes at the query level. Because Computer sessions contain more queries on average, session-level estimates further amplify the query-level differences. We examine two dimensions: \emph{horizontal expansion} (whether users work across multiple occupations) and \emph{vertical expansion} (whether users attempt more complex tasks).

\subsection{Horizontal expansion: cross-occupation work}

\subsubsection{Method}

\paragraph{Occupation inference.}

We assign Computer users to one of the 8 most common occupation clusters in the data based on their pre-Computer Search behavior. The clusters are based on the National Career Clusters Framework used in O*NET.\footnote{We drop Hospitality/Events/Tourism, Advanced Manufacturing, Energy \& Natural Resources, Construction,
  Supply Chain \& Transportation, and Agriculture because these clusters cover primarily physical,
  hands-on work that is largely outside the scope of digital knowledge work and accounts for negligible
  query volume on both products.} For each user, we map the topic domain and subdomain of each Search query to a standardized occupation cluster (e.g., Programming $\to$ Digital Technology; Finance $\to$ Financial Services; Health $\to$ Healthcare). Multi-cluster domains such as Business are split by subdomain (e.g., Business/Marketing $\to$ Marketing \& Sales; Business/Management $\to$ Management \& Entrepreneurship). The user's occupation is defined as the cluster containing their most frequently queried domain and subdomain. To ensure within-user comparison, we restrict the sample to users who actively use both Computer and Search during the study period. From each occupation cluster, we draw 1{,}000 dual-product users, yielding 8{,}000 users in total. Computer queries are then mapped to a destination cluster directly via an LLM single-label classification into the same 8-cluster taxonomy.\footnote{For both Computer and Search queries, those that do not belong to any of these clusters are classified as ``Other'' and excluded.}

\paragraph{Cross-occupation task share.} For each user, we compute the fraction of their queries that fall outside their primary occupation cluster. Each Computer and Search query is mapped to an occupation cluster and compared to the user's assigned cluster.

\subsubsection{Results}

\paragraph{Cross-occupation task share.} Computer users consistently work outside their primary occupation at higher rates than when using Search (Figure~\ref{fig:scope}). Across the 8 occupation clusters, Computer's cross-occupation share averages 59\%, compared to 50\% for Search, a 9~pp increase. The effect is largest for Management \& Entrepreneurship (+19~pp), Digital Technology (+13~pp), Arts \& Design (+12~pp), and Healthcare \& Human Services (+10~pp). Education (+6~pp) and Marketing \& Sales (+5~pp) shift modestly, while Public Service \& Safety (+2~pp) and Financial Services (+1~pp) are nearly flat. The pattern is also corroborated by interviews in Appendix~\ref{app:highlights_scope}.

\begin{figure}[!htbp]
    \centering
    \includegraphics[width=0.75\textwidth]{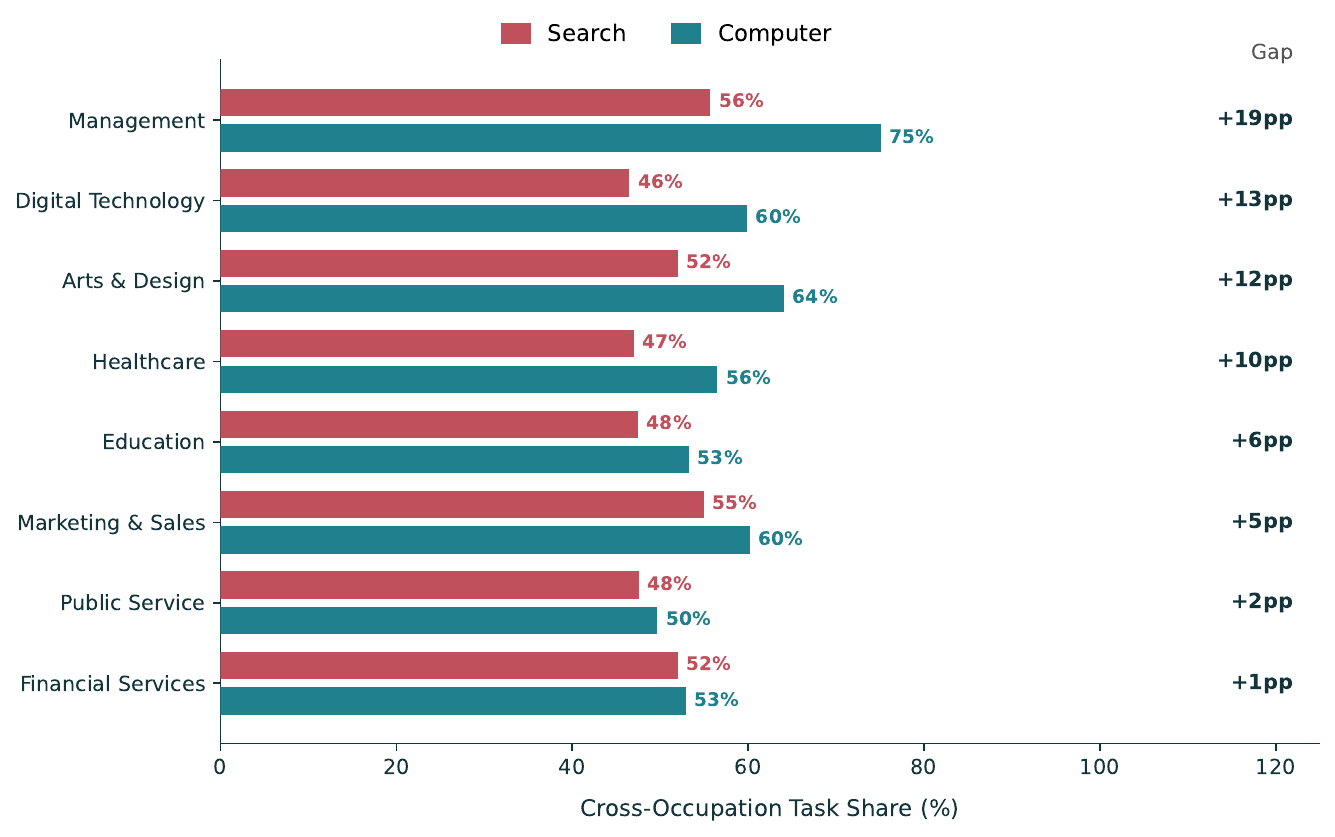}
    \caption{Cross-occupation task share by inferred occupation. Bars are averaged across users within each primary cluster. Computer users work outside their primary occupation more frequently than when using Search, particularly in Management (+19~pp), Digital Technology (+13~pp), Arts \& Design (+12~pp), and Healthcare (+10~pp). Clusters are sorted by the Computer$-$Search gap in pp (descending). Gaps are computed from unrounded shares and may differ from the difference of the rounded values shown.}
    \label{fig:scope}
\end{figure}

\paragraph{Cross-occupation destinations.} Figures~\ref{fig:cross_occ_heatmap} and~\ref{fig:cross_occ_flow} break down the share of each cluster's queries directed at occupations other than the user's own, and the two products route this work differently. For Search, Digital Technology acts as a universal sink: it is the top out-of-occupation destination for all clusters except itself, consistent with a workflow dominated by ubiquitous technical lookups. For Computer, no single hub dominates. Digital Technology remains the leading out-of-occupation destination for four clusters---Management (17\%), Financial Services (15\%), Marketing (13\%), and Arts \& Design (13\%)---but its pull is weaker than on the Search side, as reflected in the smaller share gap between the top and second destination. The flows beyond it are diverse: Management users send nearly as much work to Public Service and Financial Services (14\% each) as to Digital Technology, Digital Technology users delegate outward to Financial Services (12\%), Education (10\%), and Public Service (10\%), and Healthcare and Education exchange work in both directions (12\% each way). Figure~\ref{fig:cross_occ_flow} renders the contrast directly: a single dominant hub on the Search side, a web of flows on the Computer side. This suggests that Search cross-occupation queries look up technical facts, whereas Computer cross-occupation queries delegate work that would normally require a specialist in a different field.

\begin{figure}[!htbp]
    \centering
    \includegraphics[width=\textwidth]{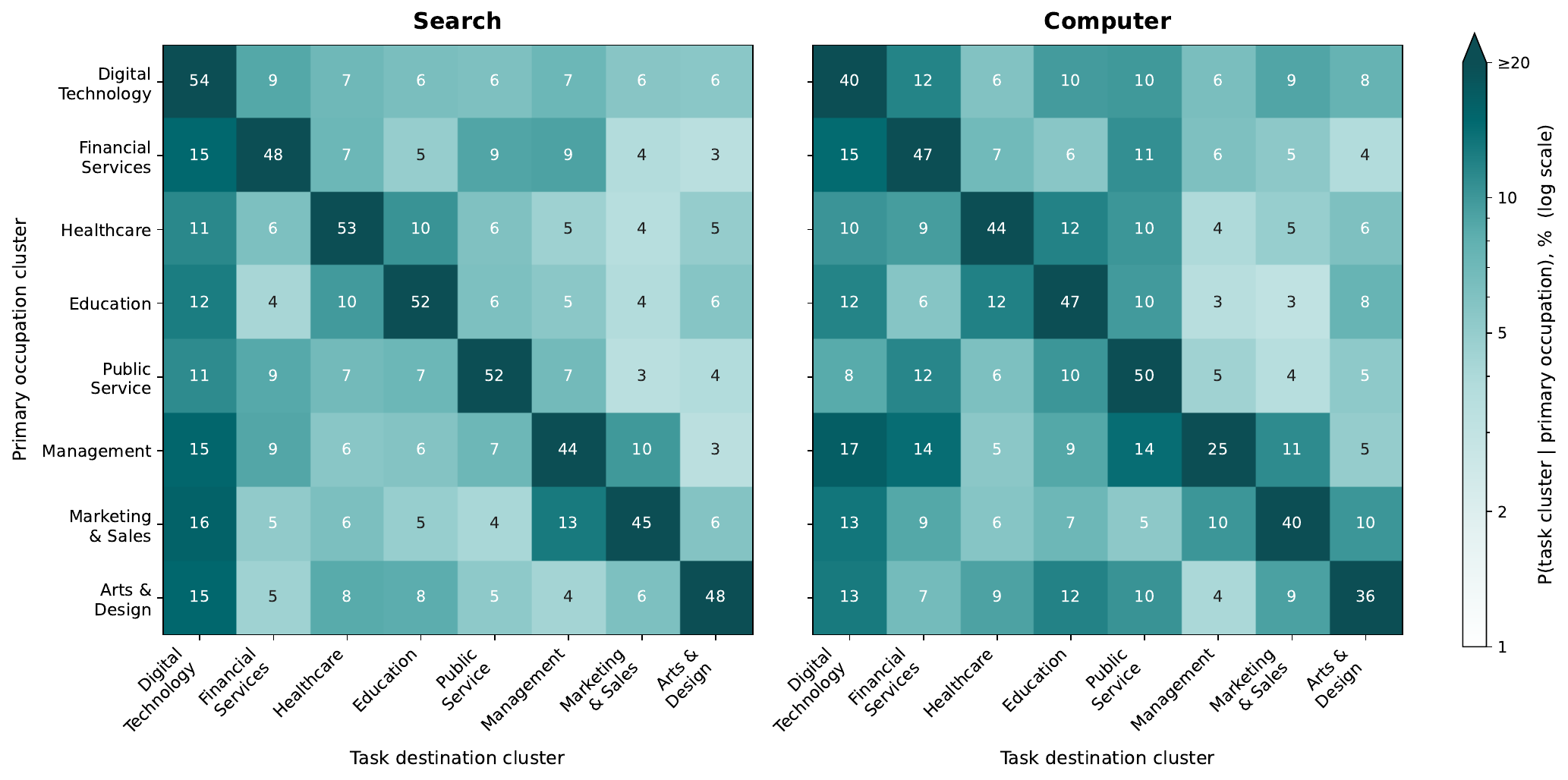}
    \caption{Cross-occupation task destinations: $P(\text{task cluster} \mid \text{primary occupation})$ for Search (left) and Computer (right) across the 8 occupation clusters. Each row sums to 100\%, subject to rounding; the diagonal is the share of work that stays within the user's primary occupation. Cells are averaged across users within each primary cluster. Color intensity is on a log scale. Computer's diagonal is consistently weaker than Search's, and its off-diagonal mass is spread more widely across multiple destination clusters.}
    \label{fig:cross_occ_heatmap}
\end{figure}

\begin{figure}[!htbp]
    \centering
    \includegraphics[width=\textwidth]{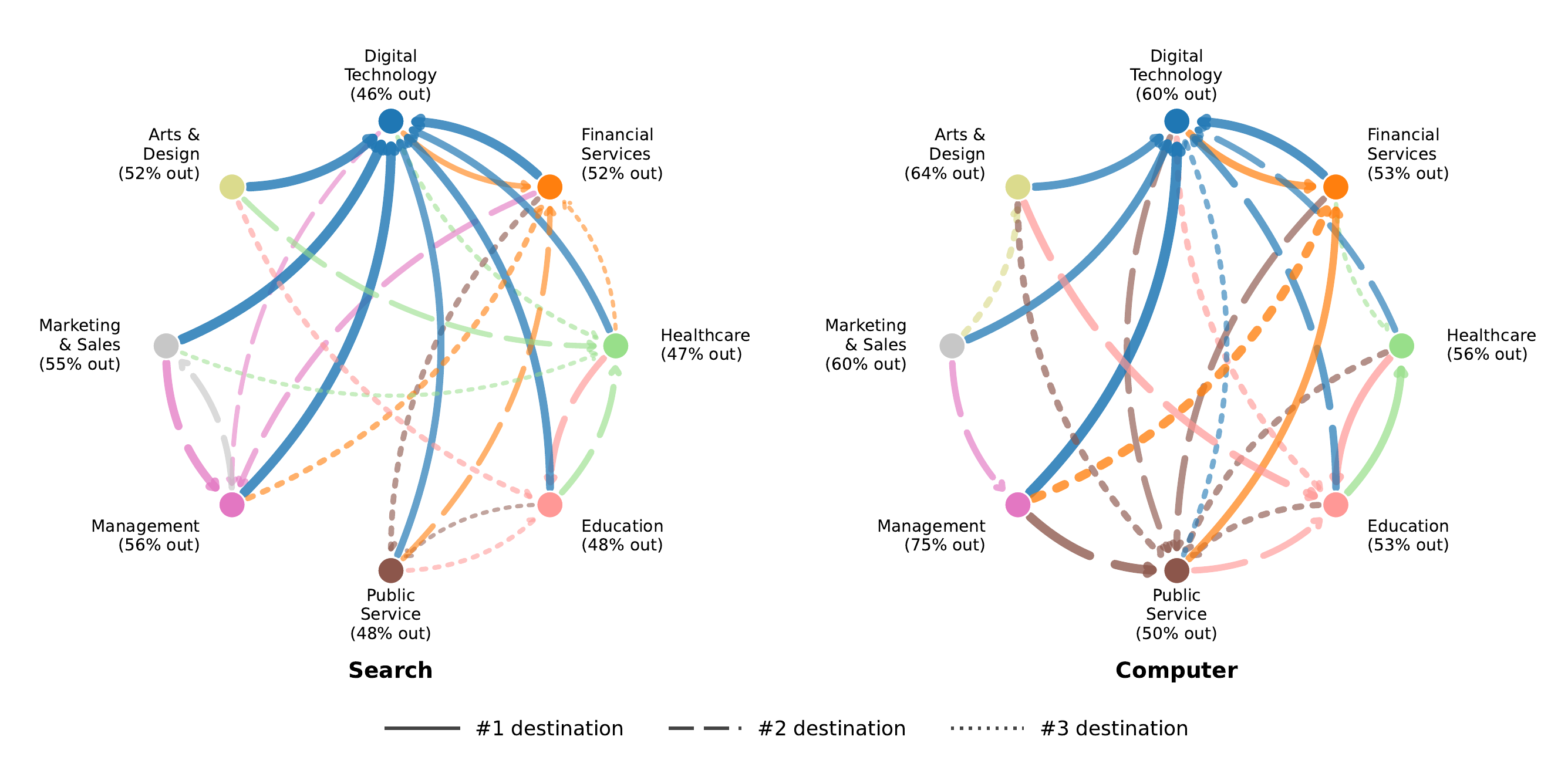}
    \caption{Cross-occupation task flows on a ring of the 8 occupation clusters. Each node shows the share of that cluster's tasks that are out of the user's primary occupation; curved arcs show each cluster's top-3 outgoing destinations, with line width proportional to $P(\text{cluster }Y \mid \text{primary}=X)$ for $Y \neq X$ and line type indicating the rank. Lines are colored by the destination cluster. \textbf{Left:} Search concentrates flows into Digital Technology as the universal top destination. \textbf{Right:} Computer spreads flows across Digital Technology, Public Service, Financial Services, and Education.}
    \label{fig:cross_occ_flow}
\end{figure}

\subsection{Vertical expansion: task complexity}

The horizontal analysis above shows that Computer usage spans more occupations. Another question is whether Computer queries also attempt more demanding tasks. We measure task complexity along four axes: \emph{cognitive level}, \emph{knowledge breadth}, \emph{task composability}, and \emph{new tasks unlocked}.

\subsubsection{Method}

We sample 10{,}000 queries (5{,}000 initial Computer queries and 5{,}000 initial Search queries) from a fixed set of 5{,}000 dual-product users (one Computer query and one Search query per user, both drawn at random from each user's sessions in the study period). The Computer side is gated to sessions that invoked at least one execution (``do'') tool. Each query is classified by an LLM along four axes using established taxonomies:

\begin{itemize}
    \item \textbf{Cognitive level: Bloom's Revised Taxonomy} \citep{anderson2001taxonomy}: six levels of cognitive demand, from \emph{Remember} (factual recall) through \emph{Understand}, \emph{Apply}, \emph{Analyze}, \emph{Evaluate}, to \emph{Create} (producing novel output). We group these into lower-order (Remember, Understand, Apply) and higher-order (Analyze, Evaluate, Create). Each query is assigned to the highest level required to complete the request.
    \item \textbf{Cognitive level: Task type} \citep{autor2003skill}: \emph{abstract} (non-routine cognitive work requiring judgment, creativity, or strategic reasoning) versus \emph{routine} (rule-following tasks such as fact lookup, format conversion, or template-based writing).
    \item \textbf{Knowledge breadth}: for each query, the minimal set of O*NET Knowledge areas \citep{onet} whose substantive expertise is required to complete the task well. O*NET, maintained by the U.S.~Department of Labor, defines 33 Knowledge areas (e.g., Economics and Accounting, Design, Medicine and Dentistry, Law and Government) that are used to characterize the knowledge requirements of nearly every U.S.~occupation. The classifier is instructed to return a \emph{minimal} set: a domain counts only if doing the task well requires real knowledge in that area, not if the topic merely appears.\footnote{For instance, a query about ``the 2008 financial crisis'' does not require Economics expertise if it is a simple lookup, but ``analyze this company's 10-K and flag accounting irregularities'' does.} For each query we record the list of domains and its cardinality (the \emph{breadth} metric).
    \item \textbf{Task composability}: each query is classified against the O*NET activity hierarchy at four nesting levels: 37 \emph{Generalized Work Activities} (GWAs, the broadest top-level groupings such as ``Getting Information'' or ``Thinking Creatively''); 332 \emph{Intermediate Work Activities} (IWAs, occupation-agnostic behaviors such as ``Analyze business or financial data'' or ``Create visual designs or displays''); 2{,}087 \emph{Detailed Work Activities} (DWAs, e.g., ``Analyze financial data to detect irregularities or trends''); and 18{,}796 occupation-specific \emph{Task Statements} (TSs, e.g., the Chief Executive task ``Direct or coordinate an organization's financial or budget activities''). To avoid mechanically inflating counts via parent-to-child rollouts, GWAs are classified directly against the 37-label catalog; IWAs are then classified only from the candidates implied by the chosen GWAs; and similarly for DWAs and TSs. For each query we record the engaged set at every level and its cardinality.
    \item \textbf{New tasks unlocked}: beyond counting activities per query, we treat each product's classified activity inventory as a set and ask which activities Computer attempts but Search essentially does not. At each O*NET nesting level (GWA, IWA, DWA, TS), the \emph{Computer-only set} at threshold $k$ is the set of activities with more than $k$ occurrences in the Computer queries and at most $k$ occurrences in the Search queries from the same users. The strictest case ($k=0$) corresponds to activities that appear in Computer but never in Search; relaxing $k$ admits activities Search attempts only rarely. For each Computer query we record whether it engages at least one Computer-only activity at each level.
\end{itemize}

\subsubsection{Results}

\paragraph{Cognitive level.} Computer queries are substantially more cognitively complex than Search queries from the same users (Figure~\ref{fig:complexity}; shares throughout are among queries classifiable on the relevant dimension). On Bloom's taxonomy, 76\% of Computer queries demand higher-order cognition (Analyze, Evaluate, or Create) compared to 55\% for Search, a 21~pp gap. The difference is concentrated at the top: 50\% of Computer queries are classified as \emph{Create} (producing novel artifacts such as code, reports, or designs) versus 26\% for Search. Conversely, Search is dominated by lower-order tasks: \emph{Remember}-level factual lookups account for 21\% of Search queries but only 7\% of Computer queries. On the Autor et al.\ task-type dimension, 71\% of Computer queries are \emph{abstract} (non-routine) versus 53\% for Search.

The concentration at \emph{Create}, rather than a uniform upward shift, suggests that autonomous execution specifically unlocks generative work (writing code, drafting documents, building artifacts) that users would not attempt through a question-answer interface. The middle levels of Bloom's taxonomy (Analyze, Evaluate) show roughly equal shares between products, indicating that the shift is not simply users rephrasing the same intent more ambitiously, but a qualitative change in the type of work delegated.

\begin{figure}[!htbp]
    \centering
    \includegraphics[width=\textwidth]{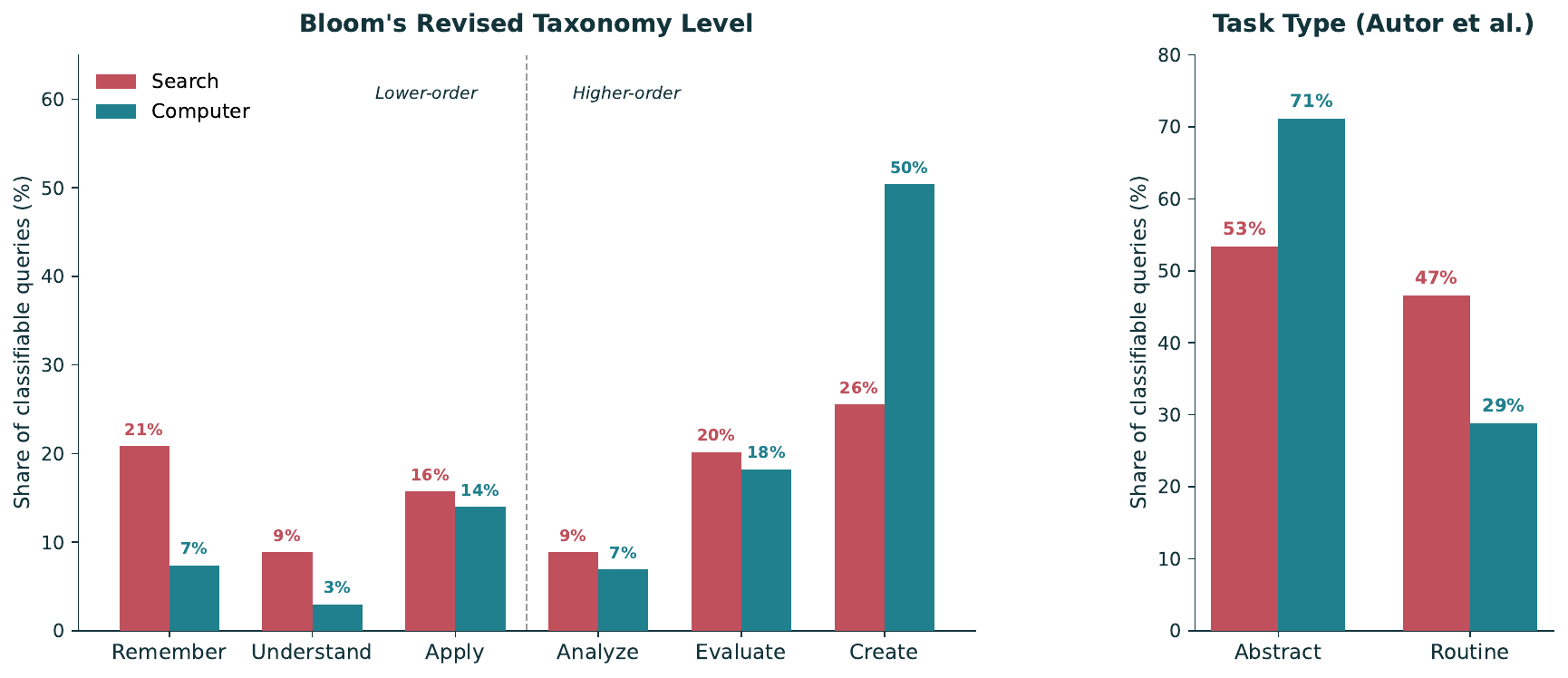}
    \caption{Cognitive complexity of Computer vs.\ Search queries. \textbf{Left:} Bloom's Revised Taxonomy distribution. Computer queries concentrate at \emph{Create} (50\% vs.\ 26\%); Search carries more lower-order weight at \emph{Remember} (21\% vs.\ 7\%). \textbf{Right:} Task type (Autor et al.). 71\% of Computer queries involve abstract, non-routine cognition vs.\ 53\% for Search. Shares are computed among queries classifiable on each dimension. Shares may not sum exactly to 100\% due to rounding. Gaps in text are computed from unrounded shares.}
    \label{fig:complexity}
\end{figure}

\paragraph{Knowledge breadth.} Beyond being more cognitively demanding, Computer tasks require substantive expertise in more distinct domains (Figure~\ref{fig:knowledge}). On average, a Computer task requires substantive expertise in 2.40 O*NET Knowledge domains, compared with 1.74 for Search, a 38\% increase. The shift is also a change in shape, not just location. Search queries cluster at 1--2 domains (77\% combined): the typical Search query is a focused lookup across few topics. Computer queries, by contrast, concentrate at 2--3 domains (76\% combined) and are nearly three times as likely as Search to require three or more domains (51\% vs.\ 17\%). These multi-competency queries are exactly the tasks that in a pre-agent workflow would require coordination across specialists, such as building a data-visualization dashboard for financial models (Design + Mathematics + Economics and Accounting).

The composition of which domains are invoked also differs sharply between products. Computer shows large prevalence gains in domains associated with production work: Design (+12~pp), Mathematics (+9~pp), Administration and Management (+9~pp), Computers and Electronics (+7~pp), Economics and Accounting (+6~pp), Communications and Media (+4~pp), and Sales and Marketing (+3~pp). Search's relative strongholds (Food Production, Sociology and Anthropology, Mechanical, Medicine and Dentistry, and Foreign Language) are precisely the domains where users tend to look up facts rather than execute on them.

\begin{figure}[!htbp]
    \centering
    \includegraphics[width=\textwidth]{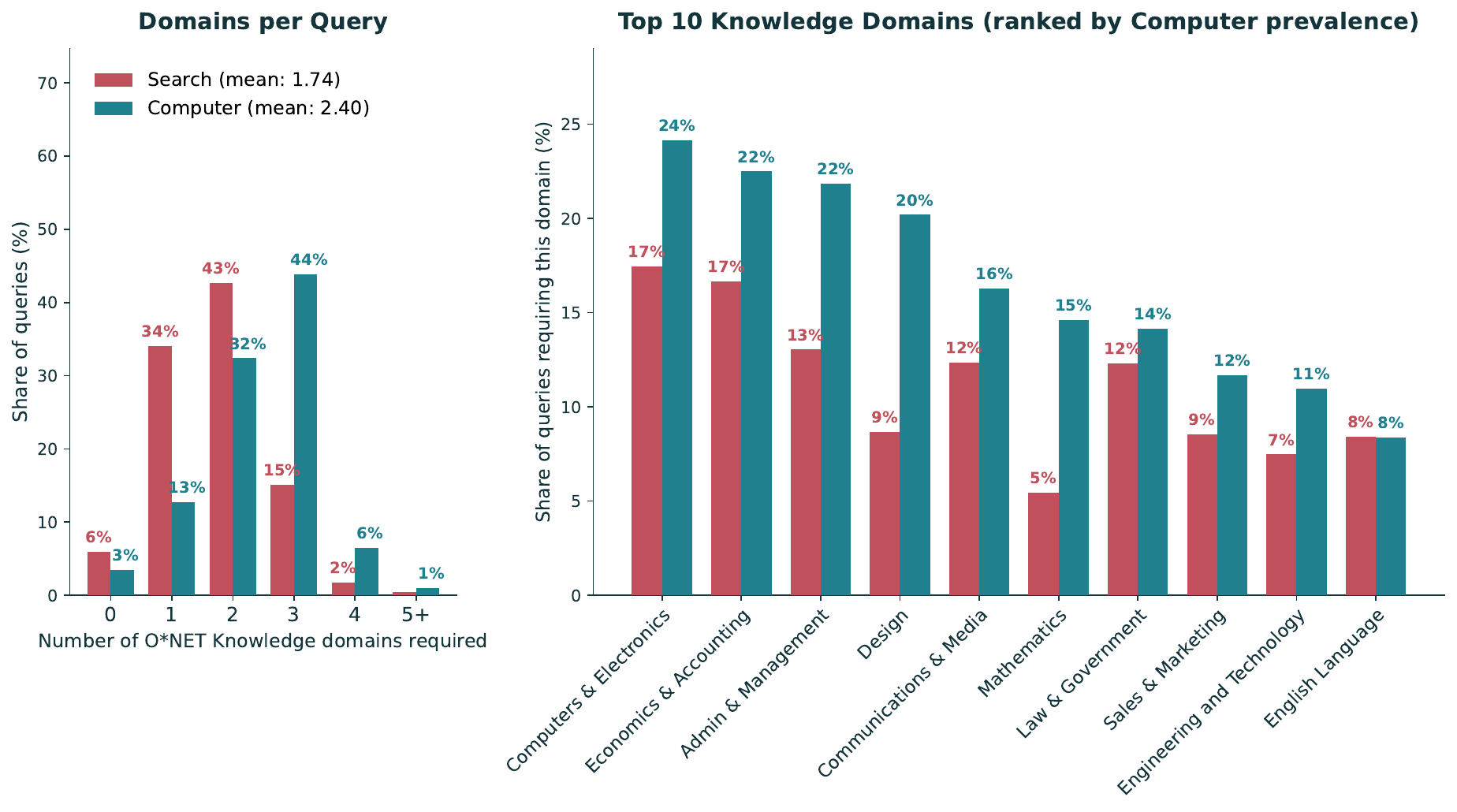}
    \caption{Required knowledge domains per query. \textbf{Left:} Distribution of the number of O*NET Knowledge areas a task requires substantive expertise in. Search concentrates at 1--2 domains (77\%); Computer queries concentrate at 2--3 (76\%) and are nearly 3$\times$ as likely to require $\geq 3$ domains (51\% vs.\ 17\%). Shares may not sum exactly to 100\% due to rounding. \textbf{Right:} Top 10 Knowledge domains by Computer prevalence, with Search shares side-by-side. Computer's largest gains are in executional/creative domains (Design, Mathematics, Administration and Management, Economics and Accounting, Computers and Electronics). Gaps in text are computed from unrounded shares.}
    \label{fig:knowledge}
\end{figure}

\paragraph{Task composability.} Computer queries decompose into more distinct work activities than Search queries from the same users, and the gap widens as the O*NET hierarchy is resolved more finely (Figure~\ref{fig:iwas}, Table~\ref{tab:task_levels}). At the coarsest GWA level, a typical Computer query engages 2.95 activities versus 2.24 for Search (+32\%), with 63\% of Computer queries engaging three or more GWAs versus 36\% of Search queries. At the IWA level the gap is larger: a typical Computer query engages 4.01 activities versus 2.87 for Search, a 40\% increase, and 83\% of Computer queries engage three or more IWAs versus 60\% of Search queries. Composition sharpens the information-versus-execution contrast at every level. At the GWA level, ``Getting Information'' is essentially shared (Computer 58\%, Search 56\%), while Computer's gains concentrate in production-oriented groupings: ``Documenting/Recording Information'' (+30~pp), ``Thinking Creatively'' (+24~pp), and ``Analyzing Data or Information'' (+14~pp). The same contrast appears among IWAs: ``Gather information from physical or electronic sources'' remains the most prevalent activity on both sides (Computer 46\%, Search 42\%), but Computer's gains concentrate in activities that produce artifacts or deliverables: ``Create visual designs or displays'' (+18~pp), ``Prepare informational or instructional materials'' (+16~pp), ``Prepare reports of operational or procedural activities'' (+13~pp), and ``Analyze business or financial data'' (+9~pp). The pattern strengthens at finer grains: at the DWA and TS levels, Computer queries engage $59\%$ and $60\%$ more activities than Search. The ten most prevalent activities at each level for both products are shown in Tables~\ref{tab:top10_gwa}--\ref{tab:top10_tasks} in Appendix~\ref{app:top10_activities}.

\begin{figure}[!htbp]
    \centering
    \includegraphics[width=.8\textwidth]{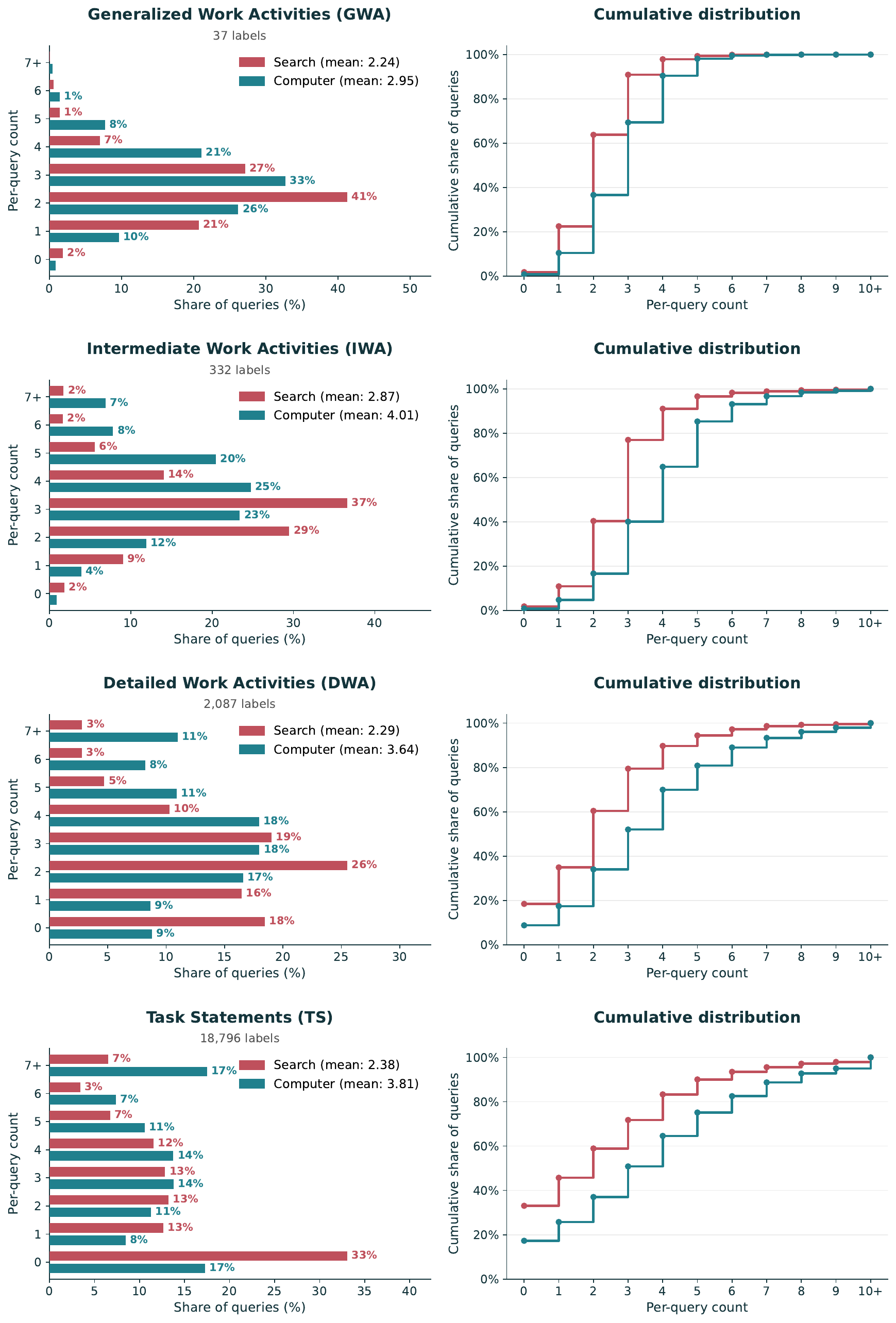}
    \caption{Per-query distribution of engaged O*NET task-level activities at four nesting depths: GWAs, IWAs, DWAs, and TSs. \textbf{Left column:} grouped-bar histogram of per-query count distributions. Shares may not sum exactly to 100\% due to rounding. \textbf{Right column:} corresponding empirical CDFs. Across all levels, Computer's distribution shifts right relative to Search. Search mass concentrates at $1$--$3$ activities while Computer concentrates at $3$--$5$ and beyond. Gaps in text are computed from unrounded shares.}
    \label{fig:iwas}
\end{figure}

\begin{table}[!htbp]
\centering
\small
\begin{tabular}{lrrrr}
\toprule
O*NET level & \# Labels & Search & Computer & Gap \\
\midrule
Generalized Work Activities (GWA)          &       37 & 2.24 & 2.95 & $+32\%$ \\
Intermediate Work Activities (IWA)             &    332   & 2.87 & 4.01 & $+40\%$ \\
Detailed Work Activities (DWA)             &  2{,}087 & 2.29 & 3.64 & $+59\%$ \\
Task Statements (TS) & 18{,}796 & 2.38 & 3.81 & $+60\%$ \\
\bottomrule
\end{tabular}
\caption{Task-activity breadth at four levels of the O*NET hierarchy. Search/Computer columns report per-query means; gap is $(\text{Computer}-\text{Search})/\text{Search}$, computed from unrounded means and may differ from those implied by the rounded values shown. The Computer$-$Search gap widens monotonically with granularity ($+32\%$ at the coarse GWA level to $+60\%$ at the fine TS level), indicating that Computer's distinctiveness lies in fine-grained executional work rather than coarse topical range.}
\label{tab:task_levels}
\end{table}

\paragraph{New tasks unlocked.} Proposition~\ref{prop:afford} shows that agent access weakly expands the affordable task frontier toward weakly higher-value tasks. Here, we measure new-task activity directly. We treat each product's classified activity inventory as a set: at threshold $k$, the \emph{Computer-only set} is the set of activities with more than $k$ occurrences in Computer queries and at most $k$ occurrences in Search queries from the same users.

Figure~\ref{fig:setdiff_curve} plots the share of Computer queries that engage at least one Computer-only activity, swept across $k = 0$--$10$ at the four O*NET nesting levels. At the strictest definition ($k=0$: activities Search never attempts in this sample), $23\%$ of Computer queries that engage at least one TS involve a Computer-only TS, $5\%$ engage a Computer-only DWA, under $1\%$ engage a Computer-only IWA, and effectively none engage a Computer-only GWA. Relaxing the Search ceiling to $k=5$ raises these to $38\%$, $18\%$, $2\%$, and under $1\%$ respectively. At the coarse GWA and IWA levels, Search and Computer cover nearly the same topical surface (the Computer-only share remains in the low single digits), but at the fine-grained TS level Computer's inventory is substantially larger than Search's, and a sizable share of Computer usage sits in the excess.

The Computer-only set at moderate thresholds concentrates in three capability clusters that align with the IWAs identified above: \emph{software and web development} (e.g., ``Develop or maintain internal or external company Web sites,'' ``Maintain or update business intelligence tools, databases, dashboards, systems, or methods''), \emph{documentation production} (e.g., ``Create or revise user instructions, procedures, or manuals,'' ``Prepare support documentation and training materials''), and \emph{data visualization and graphics} (e.g., ``Compile reports, charts, or graphs that describe and interpret findings of analyses,'' ``Process large amounts of data for statistical modeling and graphic analysis, using computers''). These are precisely the categories where Computer's tool use enables artifact delivery. Computer's expansion thus reflects not only a quantitative shift in cognitive demand and competency breadth but a qualitative one: at fine granularity, a meaningful fraction of Computer usage targets work that users essentially never directed at Search.

\begin{figure}[!htbp]
    \centering
    \includegraphics[width=0.7\textwidth]{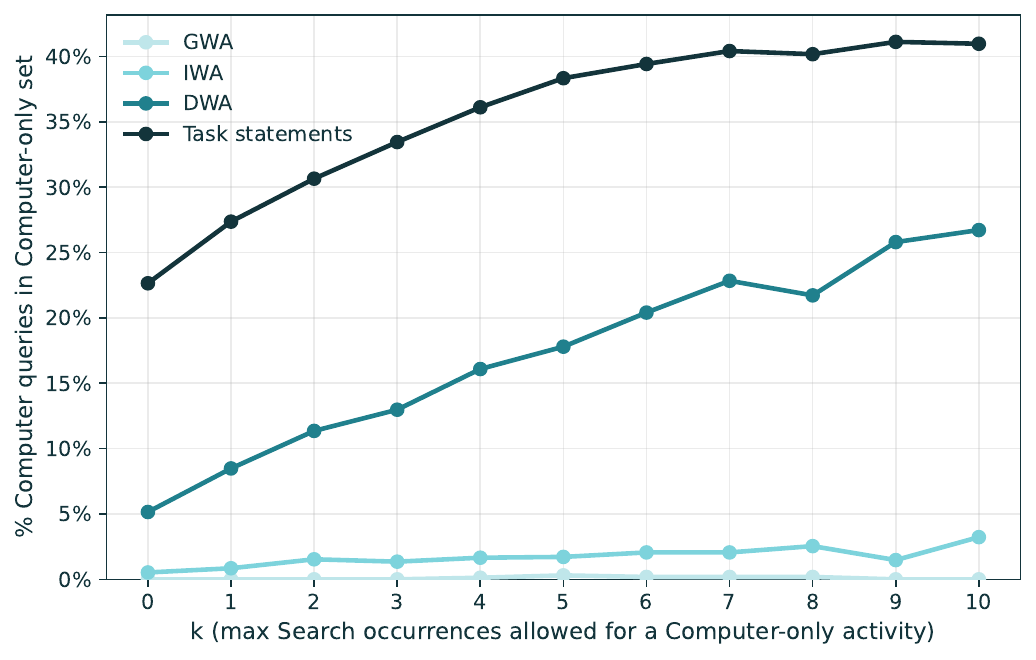}
    \caption{Share of Computer queries that engage at least one Computer-only activity, swept across the Search-occurrence ceiling $k$ and the four O*NET nesting levels. An activity is in the Computer-only set at threshold $k$ if it appears more than $k$ times in Computer queries and at most $k$ times in Search queries from the same dual-product users. At the strictest definition ($k=0$, Search occurrences must be zero), $23\%$ of Computer queries that engage at least one TS land on a Computer-only TS; at $k=5$ the share rises to $38\%$ and plateaus near $41\%$ by $k=7$. Shares are computed among Computer queries that engage at least one activity at the corresponding level. The GWA and IWA curves both flatten near zero, indicating that Search and Computer cover the same coarse topical surface; the TS curve grows steeply, indicating that Computer's distinctiveness lies in fine-grained executional work rather than topical range.}
    \label{fig:setdiff_curve}
\end{figure}

The Computer queries that unlock new tasks are systematically more complex than the rest of the Computer sample. Table~\ref{tab:unlocked_complexity} compares Computer queries that engage at least one Computer-only task statement (``unlocked'' queries, $k=0$) with other Computer queries that engage at least one task statement. Unlocked queries are more likely to require higher-order cognition ($81.9\%$ at Bloom's \emph{Analyze} level or above versus $75.4\%$) and abstract, non-routine work ($78.8\%$ versus $73.2\%$), draw on more knowledge domains ($2.67$ versus $2.50$), and decompose into more distinct work activities ($3.28$ versus $3.05$ GWAs, $4.46$ versus $4.21$ IWAs, $4.52$ versus $4.03$ DWAs, and $5.68$ versus $4.29$ TSs). This indicates that the work Computer uniquely unlocks skews toward more challenging tasks.

\begin{table}[!htbp]
\centering
\small
\caption{Complexity of unlocked Computer queries (engaging $\geq 1$ Computer-only task statement at $k=0$) versus other Computer queries that also engage $\geq 1$ task statement. The 864 Computer queries (of 5{,}000) that engage no task statement are excluded, so the two groups total 4{,}136. Shares are computed among queries classifiable on the relevant dimension.}
\label{tab:unlocked_complexity}
\begin{tabular}{lcc}
\toprule
& Unlocked queries & Other Computer queries \\
\midrule
\textit{Cognitive level} & & \\
\quad Higher-order cognition (Bloom $\geq$ Analyze) & $81.9\%$ & $75.4\%$ \\
\quad Abstract (non-routine) tasks & $78.8\%$ & $73.2\%$ \\
\addlinespace
\textit{Knowledge breadth} & & \\
\quad Mean Knowledge Domains & $2.67$ & $2.50$ \\
\addlinespace
\textit{Task composability} & & \\
\quad Mean Generalized Work Activities (GWA)  & $3.28$ & $3.05$ \\
\quad Mean Intermediate Work Activities (IWA)  & $4.46$ & $4.21$ \\
\quad Mean Detailed Work Activities (DWA)  & $4.52$ & $4.03$ \\
\quad Mean Task Statements (TS) & $5.68$ & $4.29$ \\
\midrule
$N$ & $937$ & $3{,}199$ \\
\bottomrule
\end{tabular}
\end{table}

\section{Discussion}\label{discussion}

This paper documents the downstream task-level economic implications of giving users access to autonomous task execution enabled by AI agents: Computer completes tasks autonomously and with higher quality (Section~\ref{autonomy}), which dramatically reduces the human time and cost required (Section~\ref{efficiency}) and shifts activity toward broader and more cognitively demanding work (Section~\ref{scope}). We make a few concluding comments and discuss the limitations and directions for future research.

\paragraph{The role of autonomy.} Autonomy is the key that unlocks other downstream effects. Agents eliminate the manual task-decomposition and execution loop in conversational sessions and produce higher-quality outputs. The pre-agent binding constraint on what users attempt is therefore not information access but execution capacity. When execution is delegated to the agent, the user's role shifts from operator to supervisor, reallocating time toward higher-order work such as direction, verification, and task extension.

\paragraph{From speed to scope.} The existing evidence has largely focused on productivity; our results suggest that a productivity framing might understate the impact of autonomous agents. Although the estimated time and cost savings are large, the more consequential finding may be scope expansion: users undertake tasks outside their primary domains and at higher levels of complexity.

\paragraph{Limitations.} We note a few important caveats. First, the 3-month observation window (Feb~27--May~27, 2026) captures an early-adoption period in which users are disproportionately power users and paying subscribers. Whether the patterns generalize to a broader population as the product matures remains an open question. Second, the matched-query methodology, while controlling for task content, applies only to the subset of Computer queries with close Search equivalents. As the scope analysis suggests, many Computer queries lack such analogues, and our estimates of autonomy and efficiency may not generalize to these tasks. Nevertheless, given that Computer queries tend to be more complex, the corresponding gains are likely to be larger for these tasks. Third, although sessions provide a natural unit for organizing distinct tasks, they are noisy proxies: users may distribute a single task across multiple sessions or, conversely, undertake multiple unrelated tasks within a single session. Fourth, the efficiency estimates rely on assumed per-tool human-equivalent times, human supervision time, and LLM estimates. Although the breakeven and sensitivity analysis shows robustness to mismeasurement, the absolute magnitudes should be interpreted as approximate. Fifth, the scope analysis relies heavily on LLM-based classification, which also introduces measurement error. However, the magnitude of the gaps suggests that the patterns are unlikely to be driven by classification noise alone. Lastly, we measure user behavior within the Perplexity ecosystem and do not observe activity outside it; as a result, we may not capture the full scope of users' workflow and tool use.

\paragraph{Future directions.} 
Our conceptual framework and empirical analysis are restricted to the individual-worker and task levels, and therefore leave open how firms, consumers, or the broader labor market may respond. A natural next step is to study how these micro-level changes aggregate to organizational and labor-market outcomes. If agents lower barriers to entry across occupational boundaries and expertise levels, and reduce coordination costs, individuals may produce outputs that previously required teams. The relevant margin then extends beyond productivity on existing tasks to the recomposition of job bundles and team structures. Future work could link agent usage to downstream workplace outcomes to assess whether agents are directed primarily toward accelerating existing workers, enabling workers to assume cross-occupational responsibilities, or creating new categories of economically viable work. Our early evidence points to all three, but firm-level production and employment data are needed to assess how agents ultimately reshape the bundling of work, the definition of roles, and the structure of teams.

\clearpage
\bibliographystyle{abbrvnat}
\bibliography{references}

\clearpage
\begin{appendices}

\section*{Appendix}

\section{Proofs for the Conceptual Framework}\label{app:proofs}

We collect here the proofs of all lemmas, propositions, and corollaries in Section~\ref{framework}.

\begin{proof}[\textbf{Proof of Lemma~\ref{lem:sorting} (Agent is preferred for more complex tasks)}]
The user strictly prefers the agent mode if and only if $C(s;\Agent) < C(s;\Conversational)$, i.e.\ $f_{\Agent} + m_{\Agent} s < f_{\Conversational} + m_{\Conversational} s$. Rearranging gives
$s > (f_{\Agent} - f_{\Conversational}) / (m_{\Conversational} - m_{\Agent}).
$
The numerator is positive by Assumption~\ref{asmp:fixed} and the denominator by Assumption~\ref{asmp:marginal}, so $s^{\ast}$ is positive and well-defined. For $s < s^{\ast}$ the inequality reverses, so the user strictly prefers the conversational mode; at $s = s^{\ast}$ the two modes cost the same and the user is indifferent.
\end{proof}

\begin{proof}[\textbf{Proof of Proposition~\ref{prop:afford} (Affordable value frontier expands)}]
By Assumption~\ref{asmp:completion} task opportunities are indivisible, so individual affordability under toolkit $\mathcal{T}$ reduces to the per-task condition $c_j^{\mathcal{T}}\leq B$. Agent access preserves the conversational mode as an option, so for every task $j$,
\[
c_j^{\text{post}}
=
\min\{C(s_j;\Conversational),C(s_j;\Agent)\}
\leq
C(s_j;\Conversational)
=
c_j^{\text{pre}}.
\]
Therefore $c_j^{\text{pre}}\leq B$ implies $c_j^{\text{post}}\leq B$. The pre-period individually affordable set is a subset of the post-period individually affordable set, so $u^{\text{post}}\geq u^{\text{pre}}$.

By the task primitives and Assumption~\ref{asmp:value}, tasks are weakly ordered by step count, and value weakly increases along this order. Moreover, $c_j^{\text{pre}}$ and $c_j^{\text{post}}$ are weakly increasing in $j$ (each is a minimum of costs that are increasing in $s_j$), so the individually affordable sets are prefixes $\{1,\ldots,u^{\text{pre}}\}$ and $\{1,\ldots,u^{\text{post}}\}$. If $u^{\text{post}}>u^{\text{pre}}$, then every $j\in\{u^{\text{pre}}+1,\ldots,u^{\text{post}}\}$ is unaffordable under the conversational mode and affordable under the post-agent toolkit by the definition of the endpoints; value monotonicity and the $v_0 \equiv 0$ convention (Assumption~\ref{asmp:value}) then give $v_{u^{\text{post}}}\geq v_{u^{\text{pre}}}$. If $u^{\text{post}}=u^{\text{pre}}$, the claim is immediate.
\end{proof}

\begin{proof}[\textbf{Proof of Proposition~\ref{prop:value} (Total value expands)}]
By Assumption~\ref{asmp:completion}, realized value $W^{\mathcal{T}} = \sum_j a^{\ast}_{\mathcal{T},j} v_j$ counts only fully completed tasks, so comparing toolkits reduces to comparing their feasible attempt policies. Let $a^{\ast}_{\text{pre},j}$ be the pre-period optimum, so
\[
\sum_{j=1}^{J} a^{\ast}_{\text{pre},j} c_j^{\text{pre}}\leq B.
\]
Because $c_j^{\text{post}}\leq c_j^{\text{pre}}$ for every task $j$, the same policy is weakly cheaper under post-period costs:
\[
\sum_{j=1}^{J} a^{\ast}_{\text{pre},j} c_j^{\text{post}}
\leq
\sum_{j=1}^{J} a^{\ast}_{\text{pre},j} c_j^{\text{pre}}
\leq B.
\]
The post-period optimizer can therefore imitate the pre-period policy. By optimality of $a^{\ast}_{\text{post}}$,
\[
W^{\text{post}} \;\geq\; W^{\text{pre}}.
\]
\end{proof}

\begin{proof}[\textbf{Proof of Corollary~\ref{cor:surplus} (Total surplus expands)}]
When the conversational-only chosen bundle exhausts the aggregate budget, $K^{\text{pre}} = B$, and the post-period policy is feasible by construction, so $K^{\text{post}} \leq B = K^{\text{pre}}$. Combined with $W^{\text{post}} \geq W^{\text{pre}}$ from Proposition~\ref{prop:value},
\[
\Pi^{\text{post}}-\Pi^{\text{pre}}
=
(W^{\text{post}}-W^{\text{pre}})-(K^{\text{post}}-K^{\text{pre}})
\geq 0.
\]
\end{proof}

\begin{proof}[\textbf{Proof of Corollary~\ref{cor:ratio} (Value-to-cost ratio expands)}]
When the conversational-only chosen bundle exhausts the aggregate budget, $K^{\text{pre}} = B > 0$, and the post-period budget constraint gives $K^{\text{post}} \leq B = K^{\text{pre}}$. The pre-period selected set is therefore nonempty, and since task values are positive (Assumption~\ref{asmp:value}), $W^{\text{pre}} > 0$; Proposition~\ref{prop:value} then gives $W^{\text{post}} \geq W^{\text{pre}} > 0$, so the post-period selected set is also nonempty and $K^{\text{post}} > 0$. Therefore
\[
\frac{W^{\text{post}}}{K^{\text{post}}} \;\geq\; \frac{W^{\text{pre}}}{K^{\text{post}}} \;\geq\; \frac{W^{\text{pre}}}{K^{\text{pre}}},
\]
where the first inequality uses $W^{\text{post}} \geq W^{\text{pre}}$ and the second uses $K^{\text{post}} \leq K^{\text{pre}}$.
\end{proof}

\begin{proof}[\textbf{Proof of Proposition~\ref{prop:decomp} (Surplus decomposition)}]
Starting from $\Pi^{\mathcal{T}}=W^{\mathcal{T}}-K^{\mathcal{T}}$,
\begin{align*}
\Delta\Pi
&=
\sum_{j\in A^{\text{post}}}\big(v_j-c_j^{\text{post}}\big)
-
\sum_{j\in A^{\text{pre}}}\big(v_j-c_j^{\text{pre}}\big) \\
&=
\sum_{j\in A^{\text{pre}}\cap A^{\text{post}}}
\big(c_j^{\text{pre}}-c_j^{\text{post}}\big)
+
\sum_{j\in A^{\text{post}}\setminus A^{\text{pre}}}
\big(v_j-c_j^{\text{post}}\big)
-
\sum_{j\in A^{\text{pre}}\setminus A^{\text{post}}}
\big(v_j-c_j^{\text{pre}}\big).
\end{align*}
The first term is weakly non-negative because $c_j^{\text{post}}\leq c_j^{\text{pre}}$ for every task. The entry and exit terms depend on selected task values and are not separately signed by these assumptions alone.
\end{proof}

\clearpage
\section{Numerical Example via Dynamic Programming}\label{app:dp}\label{app:numerical}

The discrete optimization problem in Section~\ref{framework},
\[
\max_{\{a_j\}_{j=1}^{J}} \sum_{j=1}^{J} a_j v_j
\qquad \text{s.t.} \qquad
\sum_{j=1}^{J} a_j c_j^{\mathcal{T}} \leq B,
\qquad a_j \in \{0, 1\},
\]
is a 0/1 knapsack problem and admits a standard dynamic programming solution. We describe the recurrence first and then illustrate it with a four-task example.

\subsection{Dynamic programming solution}

\paragraph{Recurrence.} Let $V_{\mathcal{T}}(j, b)$ denote the maximum total value attainable using only tasks $\{1, \ldots, j\}$ with remaining budget $b$, where $b$ runs over a discretization of $[0, B]$ fine enough to resolve the costs $\{c_j^{\mathcal{T}}\}$. The recurrence is
\[
V_{\mathcal{T}}(j, b)
\;=\;
\begin{cases}
\max\bigl\{\, V_{\mathcal{T}}(j-1, b),\;\; V_{\mathcal{T}}(j-1, b - c_j^{\mathcal{T}}) + v_j\,\bigr\}
& \text{if } c_j^{\mathcal{T}} \leq b, \\[4pt]
V_{\mathcal{T}}(j-1, b)
& \text{if } c_j^{\mathcal{T}} > b,
\end{cases}
\]
with boundary $V_{\mathcal{T}}(0, b) = 0$ for all $b$. The optimal value of the user's problem is $W^{\mathcal{T}} = V_{\mathcal{T}}(J, B)$. The two cases of the recurrence correspond to whether task $j$ is included: take it (gain $v_j$, lose $c_j^{\mathcal{T}}$ from the budget) or skip it. The maximum over the two encodes optimal substructure.

\paragraph{Recovering the optimal task set.} The selected set $A^{\mathcal{T}} = \{j : a^{\ast}_{\mathcal{T},j} = 1\}$ is recovered by backward tracing. Starting at $(j, b) = (J, B)$, set
\[
a^{\ast}_{\mathcal{T},j}
\;=\;
\begin{cases}
1 & \text{if } c_j^{\mathcal{T}} \leq b \text{ and } V_{\mathcal{T}}(j, b) = V_{\mathcal{T}}(j-1, b - c_j^{\mathcal{T}}) + v_j, \\
0 & \text{otherwise},
\end{cases}
\]
then update $b \leftarrow b - a^{\ast}_{\mathcal{T},j}\, c_j^{\mathcal{T}}$ and $j$. After $J$ iterations every $a^{\ast}_{\mathcal{T},j}$ is determined and the implied total cost is $K^{\mathcal{T}} = \sum_j a^{\ast}_{\mathcal{T},j}\, c_j^{\mathcal{T}}$.

\subsection{Application to a four-task example}

We apply the DP to a stylized four-task example chosen to illustrate the propositions. Tasks are indexed by $j \in \{1,2,3,4\}$ with step counts
\[
s_1 = 1,\qquad s_2 = 2,\qquad s_3 = 3,\qquad s_4 = 10.
\]
In this example, the conversational mode alone attempts the middle task set $\{2,3\}$, while the post-agent toolkit attempts the expanded set $\{1,2,3,4\}$. This pattern is illustrative, not a general implication of the assumptions alone.

\paragraph{Parameters.} Set $f_{\Conversational} = 1$, $m_{\Conversational} = 17$, $f_{\Agent} = 18$, $m_{\Agent} = 1$ (Assumptions~\ref{asmp:fixed} and~\ref{asmp:marginal}: agent has higher fixed cost but lower marginal cost). Task values are
\[
v_1=20,\qquad v_2=50,\qquad v_3=70,\qquad v_4=200,
\]
satisfying Assumption~\ref{asmp:value} strictly in this example. The aggregate budget is $B = 87$. The induced upper affordability endpoints are $u^{\text{pre}}=3$ and $u^{\text{post}}=4$.

The breakeven threshold from Lemma~\ref{lem:sorting} is
\[
s^{\ast} = \frac{f_{\Agent} - f_{\Conversational}}{m_{\Conversational} - m_{\Agent}} = \frac{18-1}{17-1} = \tfrac{17}{16} \approx 1.06.
\]
For $s > s^{\ast}$ agent is cheaper; for $s < s^{\ast}$ the conversational mode is cheaper.

\paragraph{Per-task costs.} The mode-specific and effective costs at each $s_j$ are:
\begin{center}
\begin{tabular}{cccccccc}
\toprule
$j$ & $s_j$ & $v_j$ & $C(s_j; \Conversational)$ & $C(s_j; \Agent)$ & $c_j^{\text{pre}}$ & $c_j^{\text{post}}$ & Mode chosen post \\
\midrule
1 & 1  & 20  & 18  & 19 & 18  & 18 & Conversational \\
2 & 2  & 50  & 35  & 20 & 35  & 20 & Agent \\
3 & 3  & 70  & 52  & 21 & 52  & 21 & Agent \\
4 & 10 & 200 & 171 & 28 & 171 & 28 & Agent \\
\bottomrule
\end{tabular}
\end{center}

\paragraph{Pre-period optimum (conversational only).} Apply the DP recurrence with $J=4$, $B=87$, and pre-period costs $\{18, 35, 52, 171\}$. Task~$4$ is individually unaffordable since $c_4^{\text{pre}}=171 > 87$, so the recurrence's second branch removes it from consideration. Among the remaining tasks, $\{1,2,3\}$ would cost $18+35+52 = 105 > 87$, while $\{2,3\}$ costs $35+52 = 87$. The DP yields $V_{\text{pre}}(4, 87) = 120$ with selected set $A^{\text{pre}} = \{2, 3\}$:
\[
K^{\text{pre}} = 87, \quad W^{\text{pre}} = 120, \quad \Pi^{\text{pre}} = 33.
\]

\paragraph{Post-period optimum (conversational + agent).} Apply the DP with post-period costs $\{18, 20, 21, 28\}$. Agent reduces the cost of tasks $\{2, 3\}$ from $35+52 = 87$ to $20+21 = 41$, freeing $46$ units of budget; that relief exactly pays for the two previously omitted endpoints (task $1$ at $18$ and task $4$ at $28$). All four tasks are jointly affordable:
\[
18 + 20 + 21 + 28 = 87 = B.
\]
The DP yields $V_{\text{post}}(4, 87) = 340$ with $A^{\text{post}} = \{1, 2, 3, 4\}$:
\[
K^{\text{post}} = 87, \quad W^{\text{post}} = 340, \quad \Pi^{\text{post}} = 253.
\]
The mode mix in the post period reflects the sorting threshold: the short task $j = 1$ stays on the conversational mode (cost $18 < 19$), while tasks $j \in \{2,3,4\}$ migrate to agent.

\paragraph{Illustrating the propositions.}

We use this example to illustrate our prior propositions:
\begin{itemize}
\item \emph{Lemma~\ref{lem:sorting} (sorting).} Realized post-period choices respect the threshold $s^{\ast} \approx 1.06$: $j = 1$ stays on the conversational mode, while $j \in \{2,3,4\}$ go to agent.
\item \emph{Proposition~\ref{prop:afford}.} The affordable value frontier expands: task $4$ has $c_4^{\text{pre}}=171>B=87\geq 28=c_4^{\text{post}}$, so $u^{\text{post}}=4>3=u^{\text{pre}}$ and $v_4\geq v_3$.
\item \emph{Proposition~\ref{prop:value}.} Total value rises: $W^{\text{post}} = 340 \geq 120 = W^{\text{pre}}$.
\item \emph{Corollary~\ref{cor:surplus}.} The corollary's condition holds: the pre-period budget is exhausted ($K^{\text{pre}} = B = 87$). Total surplus rises: $\Pi^{\text{post}} = 253 \geq 33 = \Pi^{\text{pre}}$ (here realized cost happens to be unchanged, $K^{\text{post}} = 87 = K^{\text{pre}}$).
\item \emph{Corollary~\ref{cor:ratio}.} The aggregate value-to-cost ratio rises: $W^{\text{post}}/K^{\text{post}} = 340/87 > 120/87 = W^{\text{pre}}/K^{\text{pre}}$.
\item \emph{Proposition~\ref{prop:decomp}.} The surplus gain decomposes into an intensive margin of $46$ from cost savings on retained tasks, $(35-20)+(52-21)$, an entry margin of $174$ from adding tasks $1$ and $4$, $(20-18)+(200-28)$, and no exit term; $46+174=220=\Delta\Pi$.
\end{itemize}

\clearpage
\section{Sensitivity Analysis}

The tool-based efficiency estimates (Section~\ref{efficiency}) rest on two assumptions about human time: the per-tool human-equivalent time estimates (Table~\ref{tab:tool_classification}), and the 10-minute oversight charged to Computer. We trace how Computer~+~Human's advantage erodes as we make each assumption increasingly conservative, for both cost (Section~\ref{app:cost_sensitivity}) and time (Section~\ref{app:time_sensitivity}).

In every figure, the $x$-axis is the stress factor, and each panel sweeps one assumption. The left panel deflates the per-tool human-equivalent time estimates by this factor (counterfactual human time = original / factor), decreasing the human time estimate for Search~+~Human. The right panel inflates the 10-minute oversight assumption (counterfactual human time = original $\times$ factor), increasing the human time estimate for Computer~+~Human. Each thin line is one of the 18 domains; the bold line is the sample-weighted overall.

\subsection{Cost advantage}\label{app:cost_sensitivity}

Figure~\ref{fig:cost_sensitivity} plots the cost saving relative to Search~+~Human. Computer retains a cost advantage at every level until each curve crosses zero: under the per-tool stress the overall advantage survives to $16\times$ and the tightest domain (Travel) to $8\times$; under the oversight stress, to $26\times$ overall and $12\times$ in the tightest domain (Consumer Goods).

\begin{figure}[!htbp]
    \centering
    \includegraphics[width=\textwidth]{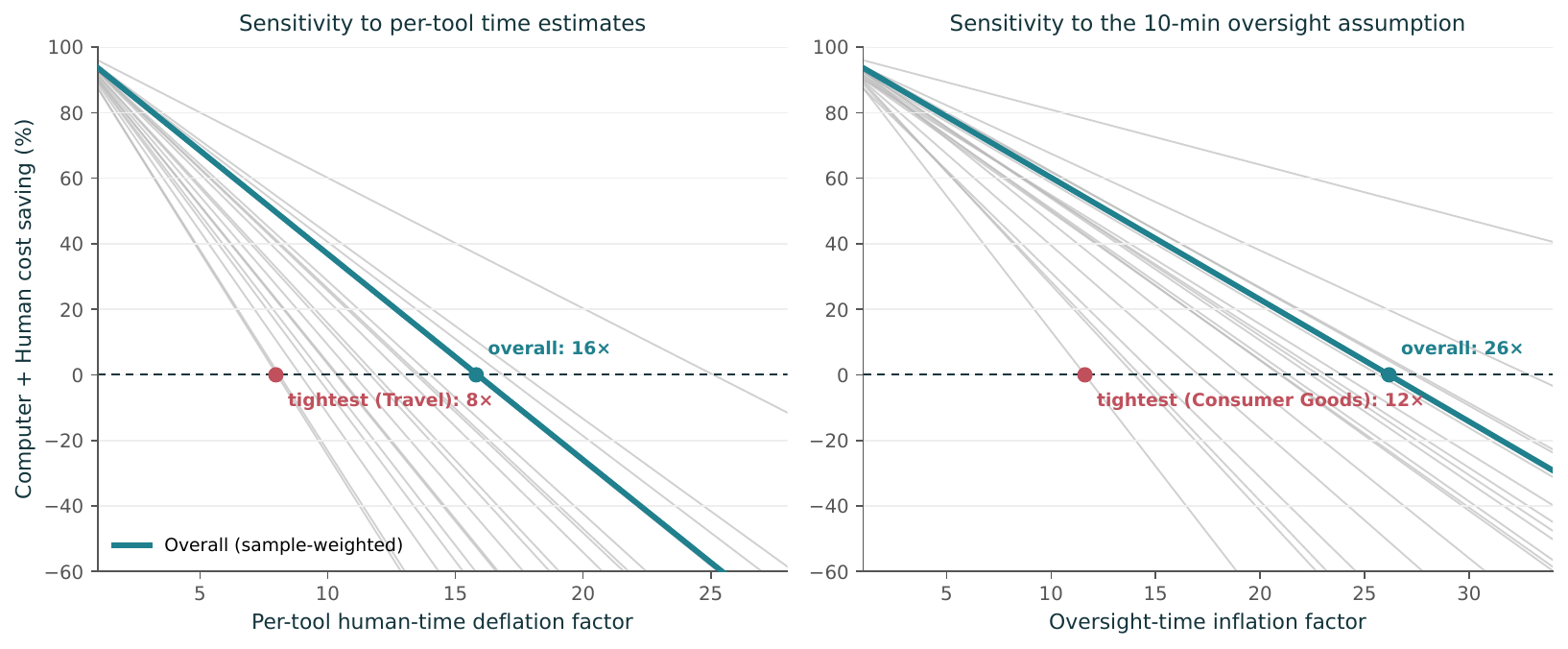}
    \caption{Computer~+~Human cost saving (\%) versus two stress factors on the human-time assumptions. \textbf{Left:}~Per-tool human-equivalent time estimates deflated by the factor on the $x$-axis. \textbf{Right:}~The 10-minute oversight assumption inflated by the factor on the $x$-axis. Thin gray lines are the 18 individual domains; the bold teal line is the sample-weighted overall. The dashed line marks zero cost saving (breakeven); markers indicate where the overall and the tightest (earliest-crossing) domain reach it.}
    \label{fig:cost_sensitivity}
\end{figure}

\subsection{Time advantage}\label{app:time_sensitivity}

Figure~\ref{fig:time_sensitivity} repeats the exercise in wall-clock minutes rather than dollars. Computer retains a time advantage at every level until each curve crosses zero: under the per-tool stress the overall advantage survives to $7\times$ and the tightest domain (Consumer Goods) to $5\times$; under the oversight stress, to $24\times$ overall and $11\times$ in the tightest domain (Consumer Goods).

\begin{figure}[!htbp]
    \centering
    \includegraphics[width=\textwidth]{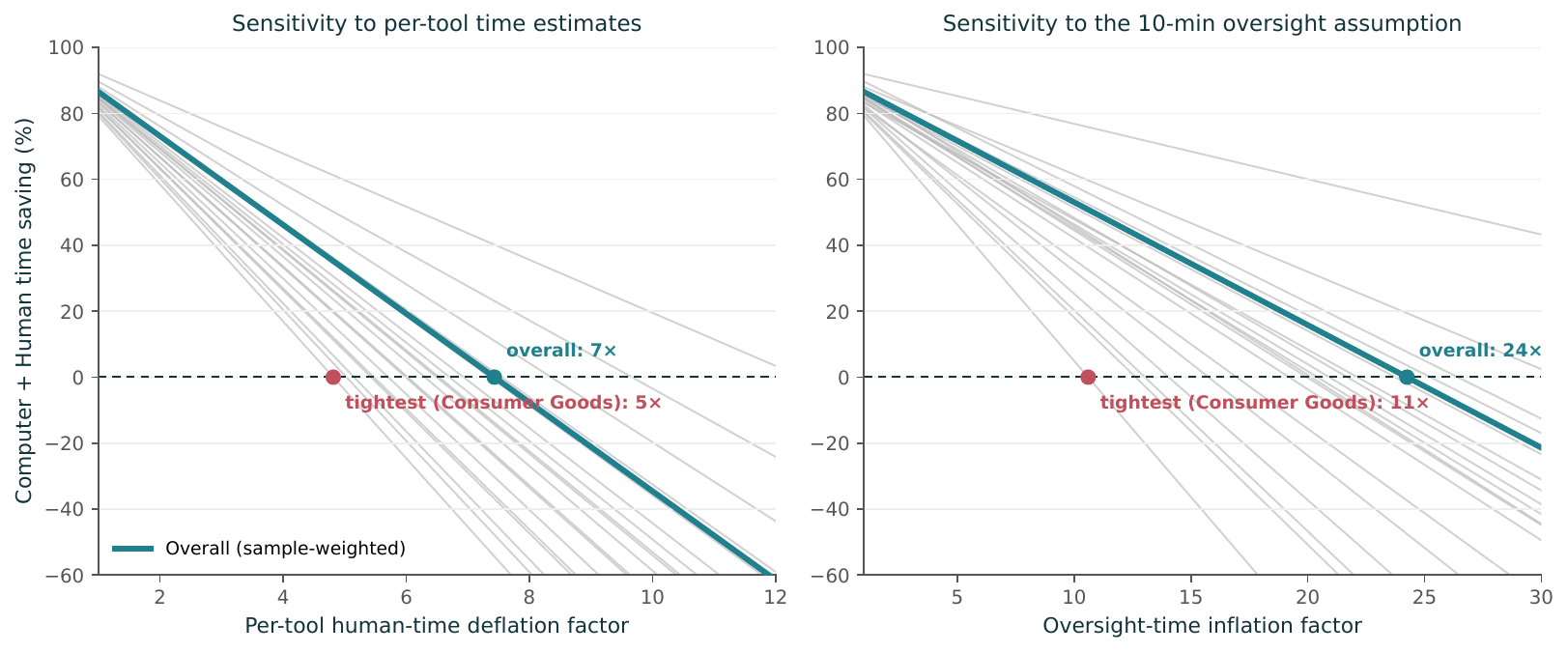}
    \caption{Computer~+~Human time saving (\%) versus two stress factors on the human-time assumptions. \textbf{Left:}~Per-tool human-equivalent time estimates deflated by the factor on the $x$-axis. \textbf{Right:}~The 10-minute oversight assumption inflated by the factor on the $x$-axis. Thin gray lines are the 18 individual domains; the bold teal line is the sample-weighted overall. The dashed line marks zero time saving (breakeven); markers indicate where the overall and the tightest (earliest-crossing) domain reach it.}
    \label{fig:time_sensitivity}
\end{figure}

\clearpage
\section{Cross-Product Complementarity}\label{app:complementarity}

A natural question at the product level is whether Computer queries are substitutes for or complements to Search queries. The conceptual framework suggests two countervailing forces. On one hand, users could substitute Computer for Search on longer tasks, reducing Search usage. On the other hand, Computer could free up budget to attempt additional shorter tasks where Search has the cost advantage, increasing Search usage. We therefore estimate the adoption effect directly. Although Search queries grew faster among Computer users than among non-Computer users (Figure~\ref{fig:adoption}), Computer users may have been more active to begin with.

To isolate the causal effect, we construct a matched sample using exact matching on three dimensions: subscription tier (Pro, Max, free), primary search topic (20~domain categories derived from the user's most frequent domain in the pre-period Search queries), and pre-period Search intensity (quartile bins). We draw a sample of 100{,}000 Computer adopters whose first use of Computer falls between February~27 and May~13, and restrict the sample to users with at least one Search query in the pre-adoption period, leaving 61{,}913 adopters. We then draw an approximately equal-sized control group of non-adopters from the same tier$\times$topic$\times$intensity cells. This design ensures that each adopter is compared to a non-adopter with the same subscription plan, the same primary search interest, and a similar baseline search rate.

We estimate a two-way fixed-effects (TWFE) difference-in-differences model on a balanced user-day panel spanning February~13 to May~27 (104~days):
\begin{equation}\label{eq:did}
    \text{SearchQueries}_{it} \;=\; \alpha_i \;+\; \gamma_t \;+\; \beta \cdot \text{Post}_{it} \;+\; \varepsilon_{it}
\end{equation}
where $\text{SearchQueries}_{it}$ is the number of Search queries by user~$i$ on day~$t$, $\alpha_i$ and $\gamma_t$ are user and day fixed effects, and $\text{Post}_{it}$ is an indicator equal to one if day~$t$ falls on or after user~$i$'s first Computer query (zero for never-adopted controls). Standard errors are clustered at the user level. A positive (negative) $\beta$ indicates complementarity (substitution) between Computer adoption and Search use. We also estimate an intensive-margin specification on the treated subsample:
\begin{equation}\label{eq:intensive}
    \text{SearchQueries}_{it} \;=\; \alpha_i \;+\; \gamma_t \;+\; \delta \cdot \text{ComputerQueries}_{it} \;+\; \varepsilon_{it}
\end{equation}
where $\text{ComputerQueries}_{it}$ is the number of Computer queries issued by user~$i$ on day~$t$. $\delta$ captures the effect of the quantity of Computer queries, not of binary adoption.

Table~\ref{tab:cannibalization} reports the regression estimates. In column~(1), the TWFE estimate of Computer adoption on daily Search queries is $\hat{\beta} = 1.05$ ($p < 0.01$): adopting Computer is associated with 1.05 additional Search queries per day. In column~(2), each additional Computer query on a given day is associated with $\hat{\delta} = 0.019$ additional Search queries ($p < 0.01$), a small but significant effect. A concern with the TWFE specification is that already-treated users may serve as implicit controls for later adopters, biasing the estimate when treatment effects vary across cohorts \citep{goodman-bacon2021}. Columns~(3) and~(4) of Table~\ref{tab:cannibalization} check the robustness of the result with alternative estimators. In column~(3), we estimate a stacked DiD in which each adoption cohort (early: Feb~27--Mar~24; mid: Mar~25--Apr~18; late: Apr~19--May~13) is compared only to the never-treated control group, with cohort-specific user and day fixed effects. In column~(4), we estimate separate DiD regressions for each cohort and aggregate via precision (inverse variance)-weighted averaging. The three approaches yield consistent estimates ($\hat{\beta} = 1.05$--$1.12$). The complementarity is also broadly uniform across topics: estimating Equation~\ref{eq:did} separately within each of the 20~primary search-topic categories yields a positive $\hat{\beta}$ in every category.

\begin{table}[!htbp]
    \centering
    \begin{threeparttable}
    \begin{tabular}{l cccc}
        \toprule
        & (1) & (2) & (3) & (4) \\
        & TWFE & Intensive & Stacked & Cohort-Agg. \\
        \midrule
        $\text{Post}_{it}$ & $1.053^{***}$ & & $1.096^{***}$ & $1.117^{***}$ \\
        & $(0.031)$ & & $(0.030)$ & $(0.030)$ \\[6pt]
        $\text{ComputerQueries}_{it}$ & & $0.019^{***}$ & & \\
        & & $(0.006)$ & & \\[6pt]
        \midrule
        User FE & Yes & Yes & Cohort $\times$ User & --- \\
        Day FE  & Yes & Yes & Cohort $\times$ Day & --- \\
        \midrule
        Unique users & 123{,}699 & 61{,}913 & 123{,}699 & 123{,}699 \\
        \bottomrule
    \end{tabular}
    \begin{tablenotes}[flushleft]
        \small
        \item \textit{Notes:} 61{,}913 Computer adopters exactly matched to 61{,}786 non-adopters on subscription tier, primary search topic (20~categories), and pre-period Search intensity (quartile bins). Panel: Feb~13 -- May~27, 2026 (104~days). Column~(2): treated users only, post-adoption days. Column~(3): stacked DiD with three cohorts (early, mid, late); each cohort paired with all never-treated controls. Column~(4): precision-weighted average of cohort-specific DiD estimates (early: $\hat{\beta} = 0.939$; mid: $1.116$; late: $1.278$). Standard errors clustered by user in parentheses. Significance: $^{*}\,p<0.10$; $^{**}\,p<0.05$; $^{***}\,p<0.01$.
    \end{tablenotes}
    \end{threeparttable}
    \caption{Effect of Computer adoption on daily Search queries. Column~(1): TWFE DiD (Equation~\ref{eq:did}). Column~(2): intensive margin (Equation~\ref{eq:intensive}). Columns~(3)--(4): robustness to staggered adoption.}
    \label{tab:cannibalization}
\end{table}

\clearpage
\section{Top Activities at Each O*NET Nesting Level}\label{app:top10_activities}

Tables~\ref{tab:top10_gwa}--\ref{tab:top10_tasks} report the ten most prevalent O*NET Generalized Work Activities (GWAs), Intermediate Work Activities (IWAs), Detailed Work Activities (DWAs), and Task Statements (TSs) by Computer prevalence in the scope sample, with Search prevalence shown alongside. Computer queries are restricted to those that invoked at least one execution (``do'') tool; Search queries are sampled from the same users. Prevalence is the share of queries that engage at least one instance of that activity, classified by an LLM against the O*NET catalog.

\begin{table}[H]
\centering
\small
\begin{tabular}{p{0.62\textwidth}rr}
\toprule
GWA & Computer (\%) & Search (\%) \\
\midrule
Documenting/Recording Information & 59.4 & 29.5 \\
Getting Information & 57.9 & 56.0 \\
Thinking Creatively & 40.7 & 16.9 \\
Analyzing Data or Information & 38.3 & 23.8 \\
Working with Computers & 14.4 & 7.3 \\
Communicating with People Outside the Organization & 11.4 & 17.5 \\
Monitoring Processes, Materials, or Surroundings & 7.0 & 5.9 \\
Estimating the Quantifiable Characteristics of Products, Events, or Information & 6.5 & 4.1 \\
Providing Consultation and Advice to Others & 6.5 & 11.9 \\
Processing Information & 6.4 & 2.9 \\
\bottomrule
\end{tabular}
\caption{Top 10 O*NET Generalized Work Activities (GWAs) by Computer prevalence, with Search side-by-side.}
\label{tab:top10_gwa}
\end{table}

\begin{table}[H]
\centering
\small
\begin{tabular}{p{0.62\textwidth}rr}
\toprule
IWA & Computer (\%) & Search (\%) \\
\midrule
Gather information from physical or electronic sources & 46.1 & 42.4 \\
Prepare informational or instructional materials & 29.5 & 13.8 \\
Create visual designs or displays & 24.8 & 6.9 \\
Prepare reports of operational or procedural activities & 19.2 & 5.9 \\
Analyze business or financial data & 16.9 & 8.4 \\
Read documents or materials to inform work processes & 15.1 & 12.2 \\
Prepare documentation for contracts, applications, or permits & 11.5 & 4.6 \\
Process digital or online data & 10.0 & 4.2 \\
Provide information or assistance to the public & 9.1 & 12.7 \\
Analyze market or industry conditions & 7.6 & 4.8 \\
\bottomrule
\end{tabular}
\caption{Top 10 O*NET Intermediate Work Activities (IWAs) by Computer prevalence, with Search side-by-side.}
\label{tab:top10_iwa}
\end{table}

\begin{table}[H]
\centering
\small
\begin{tabular}{p{0.62\textwidth}rr}
\toprule
DWA & Computer (\%) & Search (\%) \\
\midrule
Retrieve information from electronic sources & 27.7 & 19.7 \\
Search information sources to find specific data & 23.8 & 18.5 \\
Write informational material & 15.9 & 7.0 \\
Organize informational materials & 10.4 & 3.8 \\
Prepare graphics or other visual representations of information & 9.7 & 2.2 \\
Create computer-generated graphics or animation & 7.0 & 1.9 \\
Analyze financial information & 6.3 & 3.3 \\
Analyze business or financial data & 6.3 & 3.5 \\
Search files, databases or reference materials to obtain needed information & 6.3 & 4.1 \\
Prepare informational or reference materials & 6.2 & 1.9 \\
\bottomrule
\end{tabular}
\caption{Top 10 O*NET Detailed Work Activities (DWAs) by Computer prevalence, with Search side-by-side.}
\label{tab:top10_dwa}
\end{table}

\begin{table}[H]
\centering
\small
\begin{tabular}{p{0.62\textwidth}rr}
\toprule
TS & Computer (\%) & Search (\%) \\
\midrule
Search electronic sources, such as databases or repositories, or manual sources for information & 16.5 & 10.2 \\
Search standard reference materials, including online sources and the Internet, to answer pa\ldots & 10.0 & 10.3 \\
Conduct reference searches, using printed materials and in-house and online databases & 7.2 & 7.5 \\
Locate unusual or unique information in response to specific requests & 7.2 & 7.9 \\
Write text, such as stories, articles, editorials, or newsletters & 6.4 & 3.5 \\
Create graphs, charts, or other visualizations to convey the results of data analysis using \ldots & 4.6 & 0.5 \\
Report results of statistical analyses, including information in the form of graphs, charts,\ldots & 4.5 & 0.5 \\
Conduct internet-based and library research & 3.6 & 2.1 \\
Analyze business operations, trends, costs, revenues, financial commitments, and obligations\ldots & 3.0 & 1.5 \\
Retrieve electronic assets from repository for distribution to users, collecting and returni\ldots & 3.0 & 1.1 \\
\bottomrule
\end{tabular}
\caption{Top 10 O*NET Task Statements (TSs) by Computer prevalence, with Search side-by-side.}
\label{tab:top10_tasks}
\end{table}

\clearpage
\section{User Interview Highlights}\label{app:highlights}

We conducted 45-minute semi-structured interviews with 25 Computer users (6~enterprise users, 19~consumers), recruited from active Computer users with at least 5~queries. Participants walked through specific completed tasks, described their pre-Computer workflow for each, and estimated the time and cost that workflow would have taken. We group the findings into five recurring themes, summarizing the patterns across participants rather than attributing claims to individuals. These self-reports are subject to recall and selection bias and should be read as corroborative evidence; where participants gave concrete figures, we report them.

Section~\ref{app:highlights_quality} gathers \emph{quality} reports---self-reports of output quality that complement the next-turn user dissatisfaction signal in Section~\ref{autonomy}. Section~\ref{app:highlights_efficiency} collects \emph{efficiency} reports with specific before/after time and cost comparisons, which we use to cross-validate the efficiency estimates in Section~\ref{efficiency}. Section~\ref{app:highlights_recurrent} covers \emph{recurrent task automation}---automated workflows that users run repeatedly on a fixed schedule. Section~\ref{app:highlights_parallel} reports \emph{parallel work}, where asynchronous delegation lets users fire off tasks and continue other activity concurrently. Finally, Section~\ref{app:highlights_scope} documents \emph{scope expansion}---users performing tasks outside their domains of expertise---corroborating the scope analysis in Section~\ref{scope}.

\subsection{Quality improvement}\label{app:highlights_quality}

Participants across legal, founder, and product/consulting roles reported high output quality on demanding tasks. Legal users described high-quality legal first drafts and output on their hardest accounting task; founders reported technical specification documents praised by their technical leadership; and product/consulting users described polished, presentation-ready slide decks. These self-reports align with the low next-turn dissatisfaction observed in Section~\ref{autonomy}.

\subsection{Efficiency gains}\label{app:highlights_efficiency}

Efficiency claims were the most common and most quantified theme, with reported speedups spanning roughly $5\times$ to over $300\times$ (per-participant median $\approx$25$\times$) and cost reductions of two to three orders of magnitude. The gains clustered by work type:
\begin{itemize}
  \item \textbf{Legal work} clustered at the high end for drafting: legal research and writing dropped from $\approx$2~hours to $\approx$5~minutes ($\approx$24$\times$), drafting from 1--2~days to 5--10~minutes ($\approx$95$\times$), guardianship accounting from a full day to $\approx$30~minutes ($\approx$16$\times$), and an appraisal comparison report from $\approx$1~week to $\approx$15~minutes ($\approx$160$\times$).
  \item \textbf{Financial and consulting work} showed the largest synthesis gains: a risk-reporting framework went from $\approx$1~week to under an hour ($\approx$40$\times$), a 1{,}000-page synthesis from $\approx$2~weeks to $\approx$15~minutes ($\approx$320$\times$), and three major analyses from days--weeks to $\approx$1~hour ($\approx$10--80$\times$).
  \item \textbf{Product builds and websites} compressed from months to days: a full web product build from 3--6~months to 4--5~days ($\approx$15--35$\times$), a market-ready product delivered with $\approx$12~hours of human input over a 72-hour window ($\approx$6$\times$ leverage), and a full website rebuilt ground-up in $\approx$2.5~days. Course content dropped from 1--3~weeks to $\approx$1~day ($\approx$5--15$\times$).
  \item \textbf{Cost displacement} was largest on work that would otherwise require outside specialists or vendors, where reported spend fell by roughly two to three orders of magnitude ($\approx$120--750$\times$ cheaper).
\end{itemize}
These accounts cross-validate the production-data efficiency estimates in Section~\ref{efficiency}.

\subsection{Recurrent task automation}\label{app:highlights_recurrent}

Several users described moving from one-off prompting to scheduled, recurring automations. Reported workflows included periodic leadership reports assembled from cloud documents and posted to a review channel, weekend jobs that summarize a week's email, calendar, and project-tracker activity and propose the next week's priorities, and automated CRM synchronization and lead creation. A recurring benefit was keeping previously neglected workstreams---email backlogs, receivables collection---continuously up to date rather than handling them in batches.

\subsection{Parallel work}\label{app:highlights_parallel}

Because Computer executes autonomously, users described delegating tasks and continuing other work in parallel: firing off batches of asynchronous work and checking back later, setting up multi-step projects unattended, and submitting tasks before stepping away. Several characterized the resulting workflow as among the most productive stretches of their careers.

\subsection{Scope expansion}\label{app:highlights_scope}

Users repeatedly described taking on work outside their domain of expertise: handling software and data-engineering tasks, building software modules without a technical background, and preparing legal work without formal legal training. A recurring theme was spanning functions that normally require separate specialists, such as combining accounting, legal, and compliance work. These accounts corroborate the cross-occupation findings in Section~\ref{scope}.

\end{appendices}

\end{document}